\documentclass[times,twocolumn,final,authoryear]{article}

\usepackage{framed,multirow}

\usepackage{amssymb}
\usepackage{latexsym}

\usepackage{url}
\usepackage{xcolor}
\definecolor{newcolor}{rgb}{.8,.349,.1}
\usepackage{amsmath,graphicx}
%
\usepackage[T1]{fontenc} 
\usepackage[latin1]{inputenc}
\usepackage[frenchb,english]{babel}

\usepackage[capitalize,noabbrev]{cleveref} 
\crefname{equation}{equation}{equations}
\providecommand*{\fullref}[1]{\hyperref[{#1}]{\cref*{#1}. \nameref*{#1}}}
\providecommand*{\Fullref}[1]{\hyperref[{#1}]{\Cref*{#1}. \nameref*{#1}}}

\usepackage{amssymb} 
\usepackage{amsmath} 
\usepackage{amsfonts} 
\usepackage{mathrsfs}
\usepackage{amsthm}
\usepackage{kmath} 			

\usepackage{soul}		         

\usepackage{graphicx}
\usepackage{color}
\usepackage{pgfplots}
\usepackage{epsfig}
\usepackage{epstopdf}







\def\a{{\mathbf a}}
\def\b{{\mathbf b}}

\def\e{{\mathbf e}}
\def\f{{\mathbf f}}

\def\p{{\mathbf p}}

\def\u{{\mathbf u}}

\def\w{{\mathbf w}}
\def\x{{\mathbf x}}
\def\y{{\mathbf y}}
\def\z{{\mathbf z}}
\def\0{{\mathbf 0}}
\def\1{{\mathbf 1}}

\def\A{{\mathbf A}}
\def\B{{\mathbf B}}

\def\G{{\mathbf G}}

\def\I{{\mathbf I}}
\def\J{{\mathbf J}}
\def\K{{\mathbf K}}

\def\M{{\mathbf M}}
\def\N{{\mathbf N}}

\def\T{{\mathbf T}}

%





\def\Ac{{\mathcal A}}
\def\Bc{{\mathcal B}}

\def\Qc{{\mathcal Q}}


\def\Nbb{{\mathbb N}}

\def\Rbb{{\mathbb R}}




\def\eg{\textit{e.g.},\xspace}

\newkmacro\sign[1][{\cdot}]{\mathrm{sign}\left({#1}\right)} 
\newkmacro\interior[1][{\cdot}]{\mathrm{int}\left({#1}\right)}
\newkmacro\bd[1][{\cdot}]{\mathrm{bd}\left({#1}\right)}
\newkmacro\conv[1][{\cdot}]{\mathrm{conv}\left({#1}\right)}
\newkmacro\rank[1][{\cdot}]{\mathrm{rank}\left({#1}\right)}
\newkmacro\spa[1][{\cdot}]{\mathrm{span}\left({#1}\right)}
\newkmacro\supp[1][{\cdot}]{\mathrm{supp}\left({#1}\right)}
\newkmacro\Ker[1][{\cdot}]{\mathrm{ker}\left({#1}\right)}

\newtheorem{definition}{Definition}

\newtheorem{proposition}{Proposition}
\newtheorem{lemma}{Lemma}
\newtheorem{theorem}{Theorem}


 




\newcommand{\reffig}[1]{\ref{#1}}




\usepackage{tabularx}
\usepackage{multirow}
\usepackage{times}
\usepackage{url}
\usepackage{epsfig}
\usepackage{subfigure}
\usepackage{epstopdf}
\usepackage{dsfont}
\usepackage{graphics,graphicx}
\usepackage{amsmath} 
\usepackage{amssymb}  
\usepackage{psfrag}
\usepackage[all]{xy}
\usepackage{booktabs}
\usepackage{tabularx}
\usepackage{multirow}
\usepackage{pstricks}
\usepackage[boxruled,shortend,linesnumbered]{algorithm2e}
\usepackage{booktabs}

\def\cste{e^}
\def\FTrans{\T}
\def\LieAlg{se(3)}
\def\SOO{SE(3)}
\def\MTwist{\Xi}
\def\ParJoint{\boldsymbol{\sigma}}
\def\JacSkel{{\boldsymbol{\J}}}
\def\RigMot{{\boldsymbol{\rho}}}
\def\tRigMot{{\boldsymbol{\tilde{\rho}}}}

\def\ArtMot{{\boldsymbol{{\omega}}}}

\def\DJoint{{\theta}}		
\def\NPoint{{n}}			
\def\Depl{{\ArtMot}}			
\def\eDepl{{\boldsymbol{\hat{{{\ArtMot}}}}}}	
\def\tDepl{{\boldsymbol{\tilde{{{\ArtMot}}}}}}	
\def\tkerw{{\tilde{\w}}}	
\def\Forc{{\f}}			
\def\tForc{{\tilde{\Forc}}}	




\def\szSupp{{s}}		

\def\MOp{\bf{\Pi}}			
\def\Obs{\boldsymbol{\nu}}			
\def\ndof{{d}}
\def\PseJacSkel{{\K^{+}}}	 
\def\kJacSkel{{\N}}	 	
\def\supf{{\Qc}}			
\def\bsupf{{\bar{\Qc}}}			
\def\stSupp{{\kbracket{\ndof}}}			
\def\supsize{{s}}		

\def\SO3{{\mathbb{SO}(3)}}

\newkfunc{\support}{\mathrm{supp}}
\newkfunc{\acts}{\mathrm{acts}}
\newkfunc{\card}{\mathrm{card}}
\newkmacro\trace[1]{\mathrm{tr}\kparen{#1}}
\newkmacro\frobnorm[1]{\kvvbar{#1}_\textrm{F}}
\newkmacro\frobdot[2]{\kangle{#1,#2}_\textrm{F}}

\DeclareMathOperator{\spann}{span}
\newcolumntype{R}{>{\centering\arraybackslash}X}
\newcolumntype{C}{>{\raggedcenter\arraybackslash}X}
\def\CBSS{\textit{Zhou et al. \cite{ZHOU_SPARSE_2017}}}
\def\CBSV{\textit{Zhou et al. \cite{zhou_sparseness_2016}}}
\def\LfD{\textit{Tome et al. \cite{tome_lifting_2017}}}
\def\SEB{\textit{Martinez et al. \cite{martinez_simple_2017}}}
\def\VNECT{\textit{Mehta et al. \cite{mehta_vnect:_2017}}}

\def\LCR{\textit{Rogez et al. \cite{rogez_lcr-net:_2017}}}
\def\OUR{\textit{Ours}}
\def\Human{\textsc{Human3.6M}}
\def\Panoptic {\textsc{Panoptic}}
\def\MPIINF {\textsc{MPI-I3DHP}}

\title{On the Exact Recovery Conditions of 3D Human Motion\\
		from 2D Landmark Motion with Sparse Articulated Motion}
\date{}
%
\author{Abed Malti\\OCTI-INC, Los Angeles, USA\\Universit\'e de Tlemcen, Algeria}
\linespread{0.99}
\begin{document}

\maketitle

\begin{abstract}
	 In this paper, we address the problem of exact recovery condition in retrieving 3D human motion from 2D landmark motion. We use a skeletal kinematic model to represent the 3D human motion as a vector of angular articulation motion.  
	 We address this problem based on the observation that at high tracking rate, regardless of the global rigid motion, only few angular articulations have non-zero motion. We propose a first ideal formulation with $\ell_0$-norm to minimize the cardinal of non-zero angular articulation motion given an equality constraint on the time-differentiation of the reprojection error. The second relaxed formulation relies on an $\ell_1$-norm to minimize the sum of absolute values of the angular articulation motion. This formulation has the advantage of being able to provide 3D motion even in the under-determined case when twice the number of 2D landmarks is smaller than the number of angular articulations. We define a specific property which is the Projective Kinematic Space Property (PKSP) that takes into account the reprojection constraint and the kinematic model. We prove that for the relaxed formulation we are able to recover the exact 3D human motion from 2D landmarks if and only if the PKSP property is verified. We further demonstrate that solving the relaxed formulation provides the same ground-truth solution as the ideal formulation if and only if the PKSP condition is filled. Results with simulated sparse skeletal angular motion show the ability of the proposed method to recover exact location of angular motion. We provide results on publicly available real data (\Human, \Panoptic~and \MPIINF).
\end{abstract}


\section{Introduction}
Exact recovery condition of 3D human pose in monocular image sequences is an open problem in computer vision. This problem is intrinsically ill-posed since a 2D motion of skeletal landmarks can be explained by multiple 3D skeletal motion and multiple camera poses \cite{agarwal_recovering_2006, park_3d_2011}. The goal of this paper is to  formalize  exact recovery conditions for 3D sparse motion from 2D motion of landmarks which are detected on the human body. Our study assumes a kinematic model with multiple degrees of freedom as depicted in figure \ref{fig:skeleton34}. The sparsity prior is thus applied on the vector of skeletal degrees of freedom. This 3D motion can be either camera motion, body to camera global rigid motion or pure articulated motion. 
\begin{figure}[htbp]
	\begin{center}
		\includegraphics[width=0.4\textwidth]{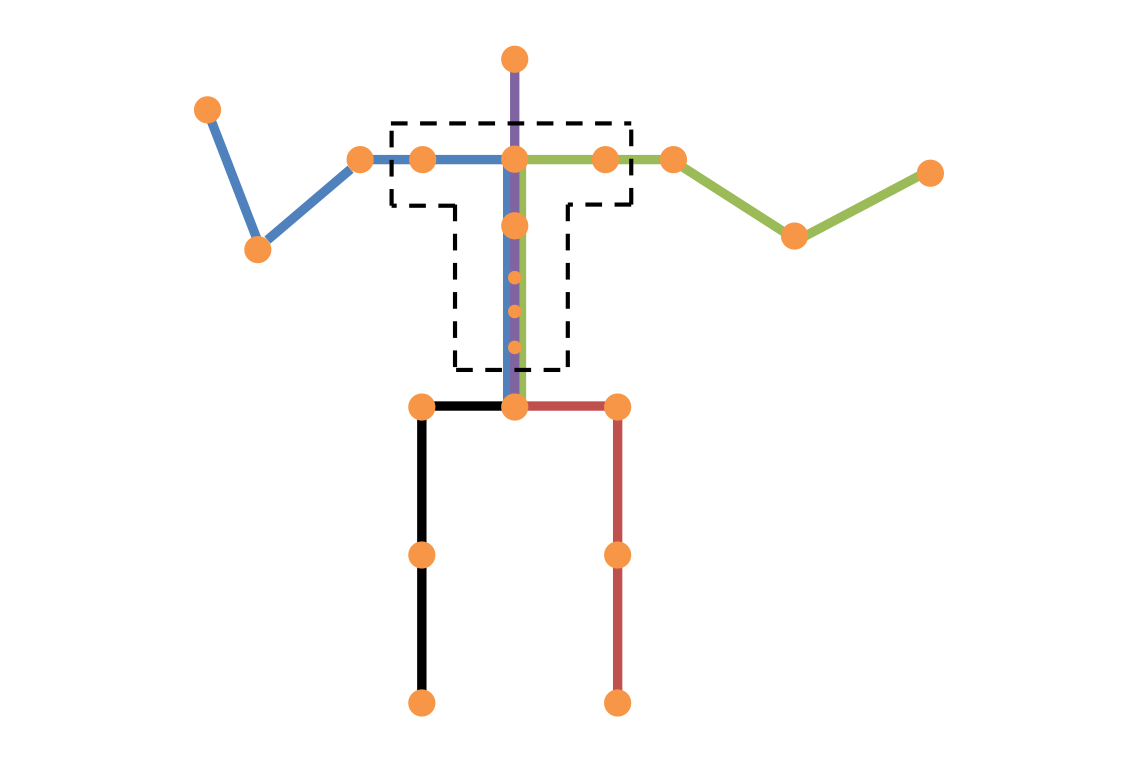}
		\includegraphics[width=0.3\textwidth]{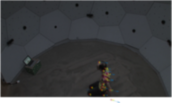}
		\includegraphics[width=0.15\textwidth]{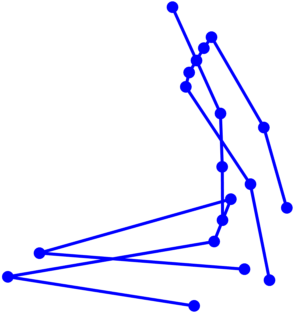}
		\caption{\label{fig:skeleton34}Top: skeleton and joints. Usually, the links enclosed in the T-dashed lines are considered rigid. One advantage of the proposed approach is to be guaranty exact recovery even when considering additional occluded angular joints, for instance in the spine and in the scapula. Bottom: sample reconstruction from \Panoptic~dataset. The sparse articulated motion prior allows us to recover high resolution angular joints with standard detected 2D joints.}
	\end{center}
\end{figure}

3D human from single image has been very productive research area since the last decade \cite{chen_3d_2009}, \cite{ramakrishna_reconstructing_2012},\cite{simo-serra_single_2012},\cite{wang_robust_2014}. Many previous works, \eg \cite{ramakrishna_reconstructing_2012} and \cite{fan_pose_2014}, assume the shape to be retrieved as a linear combination of a finite set of basis shapes. These basis shapes are usually obtained from Principal Component Analysis (PCA). The inference model is built such that the projection of the linear combination of 3D shapes fits the set of corresponding 2D skeleton landmarks in the image. In this case, the problem is to find the coefficients of linear combination and the camera-to-body pose (rotation and translation). The general solving scheme follows an iterative optimization that alternates between optimizing over the coefficients of the linear combination and the camera-to-body relative pose. This standard approach is non-convex and has many failure cases that are not only inherent to the ill-posed aspect but also to bad shape and pose initialization. More recent approach uses a convex relaxation of the formulation to recover a global optimum \cite{ZHOU_SPARSE_2017}. If this approach solved the initialization issue, it does not provide novel cues about the ambiguity problem. These approaches are extended to 3D reconstruction  from image sequence by imposing smoothness over the coefficients of the basis shapes and the camera poses \cite{zhou_sparseness_2016, WANDT_3D_2016}. However, if a sample of frame-wise 3D reconstructions are not exactly recovered from the set of possible poses it will contaminate the smoothing process and the proposed methods do not provide any guaranty of success. Instead of using a low-rank subspace of shapes, methods based on known articulated skeleton have also been investigated \cite{taylor00a}, \cite{guan_estimating_2009}, \cite{leonardos_articulated_2016}, probabilistic graphical models \cite{sigal06a}, \cite{andriluka_2d_2014}, explicit regression \cite{elgammal_inferring_2004}, \cite{agarwal06a}. Most of these methods are based on $\ell_2$ norm minimization and are sensitive to noise in data and skeleton modeling. To limit the noise in skeleton modeling~\cite{theobalt_enhancing_2003} and~\cite{carranza_free-viewpoint_2003} used an initialization step to estimate a template of the skeletal structure through a silhouette-based fitting.\\

Reconstructing 3D human motion from 2D sequence of skeleton landmarks has been first attempted in \cite{bregler_tracking_1998}. The proposed approach used an over constrained least squares method to recover the motion of the articulated angles and the rigid camera-to-body pose. The authors used an optical flow tracking to provide as many skeleton landmarks as possible to over-constrain the reconstruction problem. Even if the approach is convex, there is no guaranty of recovering the exact solution. Ambiguities that arise in 3D human pose reconstruction using an image sequence with kinematic modeling has been addressed by Park et al. \cite{park_3d_2011}. They proposed to reconstruct a 3D articulated trajectory given its 2D projection and the global camera pose at each instant. They applied constant limb length through time as spatial and temporal constraints to smooth the 3D motion. They showed that at every frame, there exist two solutions which satisfy each instantaneous 2D projection and articulation constraint. They demonstrated that resolving this ambiguity by enforcing smoothness is equivalent to solving a binary quadratic programming problem.

In this work, we address the problem of exact recovery conditions of 3D human motion from monocular 2D landmark motion when the articulated motion undergoes a sparse motion. We take advantage from recent state of the art advances in real-time 2D detection of human pose \cite{cao_realtime_2017, insafutdinov_arttrack:_2017} to compute 2D motion landmarks. We use kinematic human model to physically constrain the reconstructed 3D motion. This model is represented by a set of angular articulations connected by limbs (bones) of constant lengths. Each single rotation represents a Degree of Freedom (DoF). In a 3D reconstruction setup, the choice of the kinematic model of the human body is a trade-off  between the computational complexity and the need of the target application in recovering coarse or fine poses. A model with $150$ DoF \cite{yoshimura_multi-body_2005}, \cite{murai_musculoskeletal-see-through_2010} was proposed to study the exposure of human body to vibration in car seats. Less complex models were used with $58$ and $34$ DoFs in \cite{maita_influence_2013} and \cite{ayusawa_identifiability_2014}. The model with $34$ DoFs represents the most popular model that is used in daily activities such as locomotion. In addition to the $34$ DoFs, $6$ DoFs are used to define the rigid transform between the camera frame and the coordinates corresponding to the root base-link. Without loss of generality, this base-link is considered at the center hip. To be able to recover these degrees of freedom, we usually require more observation than DoFs to solve an over-determined least squares system. If we use specific detected landmarks in a least squares setup, we are not able to use kinematic models that contain more than two times the number of available landmarks. In our approach, this is possible since we use a sparse motion prior which states that only few DoFs moves from frame to frame when one have high enough frame rates acquisition. In the proposed formulation, we do not require to impose sparse constraint on the rigid camera to body motion. This flexibility corresponds to realistic context since often the body's base-link or camera motion change the whole rigid DoFs. However, the vector of angular articulations varies sparsely from frame to frame when the acquisition rate is fast enough. 

\noindent\textbf{Contributions and specificities.} In this paper, we study the formalization and well posedness of recoevering 3D articulated motion from 2D landmark motion under the hypothesis of sparse articulated motion.  This approach allows us to recover large number of degrees of freedom with fewer 2D skeleton's landmarks. We propose two formulations for reconstructing 3D human motion from 2D human motion. The first ideal formulation (IF) with $\ell_0$-norm represent the problem as seeking for the vector of angular articulation motion with minimum number of non-zero elements subject to a differential equality constraint on the reprojection error.
The second relaxed convex formulation (RF) uses an $\ell_1$-norm instead of an $\ell_0$-norm. We introduce the Projective Kinematic Space Property (PKSP) that jointly encodes the coupling between rigid motion and articulated motion. In theorem~\ref{theorem2}, we prove that we can recover the exact 3D articulated motion with the exact non-zero elements of the motion vector if and only if the PKSP is filled. It also proves that we are able to recover the exact camera to body rigid pose. We establish in  proposition~\ref{proposition1} under which circumstances the solution of the relaxed formulation (RF) is also the solution of the ideal formulation (IF). 

The proposed approach addresses the camera to body rigid transform but does not provide any cue about decoupling pure rigid camera motion from pure rigid body motion. It uses a perspective projection camera model to recover exact camera to body relative rigid motion. The convex formulation proposed in this paper can potentially serve as a building block or provide a good initialization to improve the performance of the existing methods with additional preprocessing on the 2D detected landmarks \cite{mehta_vnect:_2017}. Lower and upper bounds on angular articulations are physically justified by imposing the articulated rotational angle to stay within its anatomical limits \cite{stoll_fast_2011}.
This paper is inspired from an earlier version~\cite{Mal17} on Shape-from-Templated when the shape is spatially sparsely deformed. Similar projective property is instantiated for the case of articulated human body. We additionally provide real experimental validation using publicly available dataset.

\section{Related Work}
The closest related work includes physics-based methods that use the 2D detected skeleton joints in a reprojection error constraint. According to the set of used prior, state-of-the-art approaches can be split in several subsections. Some methods can be part of multiple subsections but we only mention them in one paragraph. We focus our related work study on monocular methods with both single image and image sequence approaches. We can also split the state-of-the-art methods to physics-based and deep learning approaches.\\
\noindent\textbf{Lower dimensional pose subspaces. }Ramakrishna et al.~\cite{ramakrishna_reconstructing_2012} propose a sparse linear representation in an overcomplete dictionary. They use a matching pursuit algorithm to minimize the reprojection error under anthropometric regularization. Fan et al. \cite{fan_pose_2014} propose to improve the performance of \cite{ramakrishna_reconstructing_2012} by enforcing locality when building the pose dictionary. Akhter et al. \cite{akhter_pose-conditioned_2015} add limb-angle constraint into the sparse representation to reduce the possibility of invalid reconstruction. Wang et al. \cite{wang_robust_2014} use a 2D human pose detector to automatically locate the joints and integrate a robust estimator to handle inaccurate joint locations. They constrained the proportion of $8$ selected limbs to be constant. Zhou et al. \cite{zhou_spatio-temporal_2014} address the human pose estimation problem as a matching problem in which a learned spatio-temporal pose model is matched to point trajectories extracted from a video. Zhou et al. \cite{ZHOU_SPARSE_2017} propose a convex relaxation with $\ell_1$-norm on the coefficient of the used basis shapes. Zhou et al. \cite{zhou_sparseness_2016} extend the former approach to account for temporal smoothness by imposing regularization terms on the coefficients and on the camera to human pose orientation in an image sequence. Wandt et al. \cite{WANDT_3D_2016} impose a temporal bone lengths constancy in an image sequence  within both periodic and non-periodic motion setting.\\
\noindent\textbf{Factorization in image sequence. }Unlike lower dimensional pose subspaces approaches, in factorization approaches the set of basis shape dictionary is supposed to be unknown and is part of the problem.   In the seminal work by Tomasi and Kanade \cite{tomasi92a} the 3D motion to reconstruct is supposed to be piece-wise rigid. Bergler et al. \cite{bregler00a} extend the former approach to deforming shapes. The reconstructed 3D shape is expressed as a linear combination of static basis shapes. Xiao et al. \cite{xiao04a} exhibit the fact that using only orthonormal constraint on the camera rotation leads to multiple solutions. They add constraints to the basis shape to retrieve a unique closed-form solution. Some authors as \cite{torresani03a, torresani08a} and \cite{torresani01a} impose a Gaussian constraint on the linear coefficients to avoid non-rigid self-calibration. Later on, Akhter et al. \cite{akhter_trajectory_2011} show that the solution in \cite{xiao04a} is still ambiguous. They add a strong prior which is to constrain Non-rigidity by a periodic base function. Zhu et al. \cite{zhu_3d_2011} propose to use small keyframes to avoid ambiguities between point and camera motion. They also use $\ell_0$ minimization to enforce a sparsity constraint on the trajectory basis coefficients. Dai et al. \cite{dai12} used $\ell_1$-minimization to limit the number of non-zero linear coefficients. Paladini et al.~\cite{paladini_optimal_2012} formalize the metric upgrade of the solution from affine to Euclidean space with a convex relaxation approach. Del Bue et al.~\cite{bue_bilinear_2012} propose a manifold constrain approach to solve bilinear factorization problems in the case of missing data measurement. Agudo et al.~\cite{agudo_good_2014} formalize solving the basis shape as an eigenvalue problem and propose to retrieve the coefficients and camera pose in a sliding window bundle adjustment framework.\\
\noindent\textbf{Skeleton/anthropometric. } Bergler and Malik~\cite{bregler_tracking_1998} propose an intensity-based tracker using a kinematic modeling with twists and exponential maps. Sminchisescu et al.~\cite{sminchisescu_kinematic_2003} propose a kinematic body tracker by enforcing online ambiguity rejection. Sminchisescu et al.~\cite{sminchisescu_variational_2004} propose a body tracker using a mixture density smoothing model in a bayesian framework. Hasler et al.~\cite{hasler_multilinear_2010} constrain bone length constancy in image sequence. Park et al.~\cite{park_3d_2011} propose to reconstruct a 3D articulated trajectory of a joint given the trajectories of: {\em{(1)}}  its two dimensional projection, {\em{(2)}} the 3D parent joint, and {\em{(3)}} the camera pose. The reconstruction ambiguity was resolved by enforcing trajectory smoothness and solving a binary quadratic programming problem.\\
\noindent\textbf{Skeleton with deep learning. }Tome et al.~\cite{tome_lifting_2017} and Martinez et al.~\cite{martinez_simple_2017}  build a network that given 2D joint locations predicts 3d positions. Many authors~\cite{mehta_monocular_2016,rogez_lcr-net:_2017,pavlakos_coarse--fine_2017} and \cite{zhou_towards_2017} propose to directly regress 3D human pose joint location from an image. Moreno-Noguer \cite{moreno-noguer_3d_2017} formulates the problem as a 2D-to-3D distance matrix regression. Katircioglu et al. \cite{katircioglu_learning_2018} learn a high-dimensional latent pose representation and accounts for joint dependencies via an auto-encoder framework. They further propose an efficient Long-Short-Term Memory network to enforce temporal consistency on 3D pose predictions. Zhou et al. \cite{zhou_deep_2016} embed a kinematic model into a deep neural network learning for general articulated object pose estimation. The kinematic function that is used is differentiable so that it can be used in the gradient descent based optimization in network training. Mehta et al. \cite{mehta_vnect:_2017} use the same pose regressor for both 2D and 3D joint location in a real-time setting. To ensure stability and temporal consistency, bone lengths and bone direction fitting are used. \\
\noindent\textbf{Dense 3D human with deep learning. }Bogo et al.~\cite{bogo_keep_2016} propose the first method to estimate the 3D pose as well as the 3D surface mesh of the human body from a single unconstrained image. Alldieck et al.~\cite{alldieck_optical_2017}  exploit properties of optical flow to temporally constrain the reconstructed human motion. They minimize the difference between computed flow fields and the output of a flow renderer. Kanazawa et al.~\cite{kanazawa_end--end_2017} propose an end-to-end framework for reconstructing a full 3D mesh of a human body from a single RGB image. Xu et al. \cite{xu_monoperfcap:_2017} reconstruct articulated human skeleton motion as well as medium-scale non-rigid surface deformations in general scenes. They use a low dimensional trajectory subspace to resolve the ambiguities of the monocular reconstruction problem. Generally, methods with deep learning approaches reconstruct relative depths but the reprojection of the 3D skeleton do not accurately match the 2D location of the body's landmark in the image. Furthermore, they use a normalized image size which breaks the perspective model. The loss function is based on this 2D normalized inputs.\\
\noindent\textbf{Notations. }
Normal letters        = scalars, \eg $a,b,A,B, etc$.
Bold small letters   = vectors, \eg $\a,\b, etc$.
Bold capital letters = matrices, \eg $\A,\B, etc$.
Calligraphic letters = sets, \eg $\Ac,\Bc, etc$.
$i$th element of vector $\a$: $a_i$.
Element located at row $i$ and column $j$ of $\A$: $A(i,j)$.
$\ktranspose{\a}$ denotes the transpose of $\a$.
$\bar{\Ac}$ denotes the complementary set  of $\Ac$.
If $\supf$ is a set of indices, $\A_\supf$ represents the submatrix of $\A$ made up of the columns indexed by $\supf$. 
$\a_\supf$ represent either the restriction of $\a$ to the indices in $\supf$, or the vector which coincides  with $\a$ on the indices in $\supf$ and is extended to zero outside $\supf$. It should be clear from the context which notation is meant. 
$\stSupp=\{1,\ldots,3n \}$, $n\in\Nbb$.
$\stSupp^s$: All subsets of $\stSupp$ of cardinal $\supsize$.
$\supp[\a]=\kbrace{i : a(i)\neq 0}$ is
the set of integers indexing the non-zero elements of $\a$.
$\kbracket{\a_i}_{i=1}^k=\kbracket{\a_1 \ldots \a_k}$.
$\kbrace{\a_i}_{i=1}^k=\kbrace{\a_1 \ldots \a_k}$.
$\ker\kparen{\A} = \kbrace{\x : \A\x=\0}$.
$\spann\kparen{\A} = \kbrace{\y : \A\x=\y, \x \text{ real vector}}$.
$\kvvbar{\cdot}_0$ is the $\ell_0$-norm.
\begin{figure}[htbp]
	\begin{center}
		\includegraphics[width=0.45\textwidth]{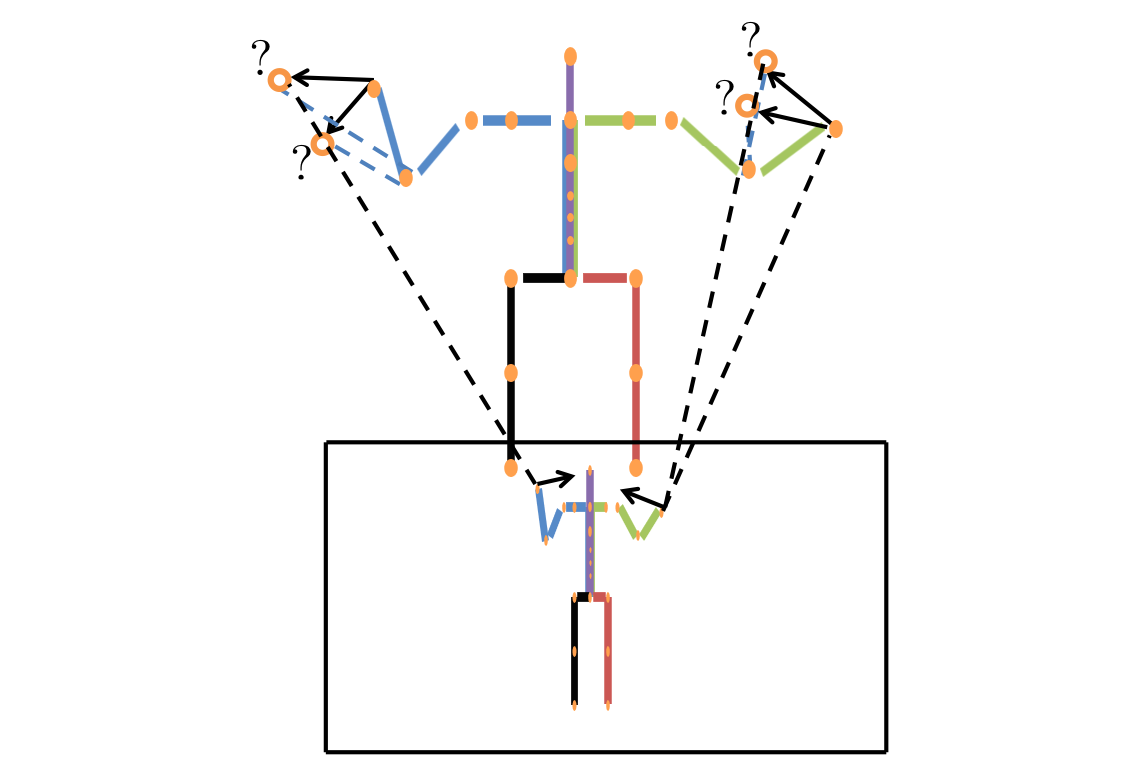}
		\caption{\label{fig:skeleton34_2}Sketch of possible ambiguities when reconstructing 3D differential motion from 2D differential motion.}
	\end{center}
\end{figure}	
\section{Formulation Tools}
\subsection{The Kinematic Model}
In this section, we describe the main steps to obtain a linear model mapping differential rigid and articulated motion to differential image motion. This kinematic modeling is inspired from \cite{bregler_tracking_1998} which used twists and exponential maps to efficiently map angular joints speed to body's 3D point velocities. The body's point velocities are projected with a perspective model to obtain 2D image feature velocities. Without loss of generality, we assume the base root of the human skeleton to be located at the center of the hip. This hypothesis is standard in many works \cite{mehta_monocular_2016,tome_lifting_2017,mehta_vnect:_2017}. It is well suited for rigging and skinning applications. 
In this work, the differential rigid motion stands only for relative camera to skeleton motion. Differential articulated motion stands for pure angular joints displacement. As will be seen, the proposed formalism do not separate absolute camera motion from global skeleton rigid motion. 
\subsubsection{Pure Differential Articulated Motion}
The human skeleton can be seen as five serial kinematic chains all connected to the same basis coordinate frame located at the hip. The five chains are hip to head, hip to right wrist, hip to left wrist, hip to right ankle and hip to left ankle as depicted in figure \ref{fig:skeleton34}. 
The skeleton chain can be seen as a tree where the spine is a common tree trunk to the upper body. 
Let us consider $\p_i=\ktranspose{\kbracket{x\ y\ z}}\in\Rbb^3$, $i\in\kbracket{\NPoint}$, as a skeleton point positioned at some limb. $\NPoint$ being the number of considered skeleton's point. A set of parent joints for a given skeleton's point is the set of all joints contained in the serial kinematic chain linking the hip to the current point (obviously the hip is a parent joint to all skeleton's joints). 
Each skeleton's joint is a rotational joint which is either one-rotation degree of freedom (elbows and knees) or three-rotation degrees of freedom (shoulders, left-right hips, spine joints, neck, etc).
Let us denote $\kbracket{\theta_1\,\ldots\,\theta_d}$ the vector of the whole articulated angular degrees of freedom of the 
kinematic structure. $\ndof$ being the total number of pure articulated angular degrees of freedom.
We denote $\ParJoint(i)=\kbrace{\sigma_1(i)\ldots \sigma_m(i)}$ the set of indices of the parent angular articulations of point $i$, 
$i\in\kbracket{\NPoint}$. $m\in\kbracket{\ndof}$ is the number of parent angular degrees of freedom. 
The twist $\MTwist\in\LieAlg$ constructed from the unit axis of rotation and translation allows us to obtain the rotation transform thanks to the exponential $\cste{\MTwist\, \DJoint}\in\SOO$ for $\DJoint\in\Rbb$ \cite{murray_mathematical_1994}. Where $\LieAlg$ is the Lie Algebra of the group of rigid transforms $\SOO$.
Let us denote $\FTrans_c\in\SOO$ as the camera to hip transform.
Let us consider a 3D skeleton's point with constant location $\p_i^0$ in the reference frame attached to $\sigma_m(i)$. The location $\p_i$ of this point in the camera frame is given as 
\begin{align}
	\p_i = \FTrans_i\,\p_i^0,
\end{align}
where $\FTrans_i\in\SOO$ is the rigid transform from the camera to the coordinate frame that is attached to $\sigma_m(i)$ and is given by
\begin{align}
	\FTrans_i=\FTrans_c\cste{\MTwist_{\ParJoint_1(i)}\, \DJoint_{\ParJoint_1(i)}}\ldots \cste{\MTwist_{\ParJoint_m(i)}\, \DJoint_{\ParJoint_m(i)}},\ i\in\kbracket{\NPoint}. 
\end{align}

The differential 3D displacement of point $\p_i$ due to pure differential angular articulation $\dot{\theta}_j$, in the camera reference frame, is computed as  
\begin{align}
	\dot{\p}_i&=\J_{j}\kparen{\p_i}\,\dot{\theta}_j,
\end{align}
where
\small{
	\begin{align}
		\J_{j}(\p_i) =\left\{\begin{array}{c}
			\0, \text{ if } j\notin \sigma(i).\\
			\kMRot\kparen{\pderiv{\T_i}{\theta_j}\kinv{\T_i}}\,\p_i+\kMTrans\kparen{\pderiv{\T_i}{\theta_j}\kinv{\T_i}},\text{ else}.
		\end{array}
		\right.
	\end{align}
}
Given
\begin{align}
	\label{eq:twist}
	\pderiv{\T_i}{\theta_{\ParJoint_k(i)}}\kinv{\T_i} = \FTrans_{\ParJoint_{k-1}(i)}\,\MTwist_{\ParJoint_k(i)}\,\kinv{\FTrans}_{\ParJoint_{k-1}(i)},\ i\in\kbracket{\NPoint}.
\end{align}
$\kMRot$ and $\kMTrans$ operators extract respectively rotation and translation parts of an $\SOO$ matrix.  Let us denote by $\ArtMot=\ktranspose{\kbracket{\dot{\theta}_1\ldots\dot{\theta}_\ndof}}$ the concatenated vector of differential articulated motion. We can thus write the differential 3D displacement with respect to all parent angular articulations as follows
\begin{align}
	\dot{\p}_i=\kbracket{\J_1(\p_i)\,\ldots\,\J_\ndof(\p_i)}\,\ArtMot.
\end{align}
\subsubsection{Pure Differential Rigid Motion}
Let us denote by $\RigMot=\ktranspose{\kbracket{\ktranspose{\mathbf{v}}\,\ktranspose{\w}}}$ the concatenated vector of differential rigid motion. The differential displacement of a point $\p_i$ on the skeleton with respect to a global camera to body differential rigid motion, in the camera reference frame, is computed as
\begin{align}\dot{\p}&=\kbracket{\begin{array}{cc}\I& \kbracket{\begin{array}{ccc}0 &p_z &-p_y\\-p_z&0&p_x\\p_y&-p_x&0 \end{array}}\end{array}}\RigMot\\&=\kbracket{\I\, \hat{\p}}\RigMot\\&=\G(\p)\RigMot.
\end{align}
$\hat{\p}$ denotes the skew symmetric matrix associated to $\p$. $\mathbf{v}\in\Rbb^3$ and $\w\in\Rbb^3$ are respectively the amounts of differential translation and differential rotation of the rigid motion. 
\subsubsection{Combined Differential Motion}
The differntial displacement of any 3D skeleton's point $\p_i$ that is caused by both differential rigid and articulated motions can be written as
\begin{align}
	\dot{\p}_i=\kbracket{\G(\p_i)\ 	\J_1(\p_i)\,\ldots\,\J_\ndof(\p_i)}\ktranspose{\kbracket{\ktranspose{\RigMot}\,\ktranspose{\ArtMot}}}.
\end{align}
For a given set of 3D skeleton's points $\p_1,\ldots, \p_\NPoint$, the former relation extends to
\begin{align}
	\begin{bmatrix}
		\dot{\p}_1\\\vdots\\\dot{\p}_\NPoint
	\end{bmatrix}
	=
	\bf{\boldsymbol{\Gamma}}\,\RigMot+\JacSkel\,\ArtMot
\end{align}
where 
\begin{align}
	\boldsymbol{\Gamma}=\ktranspose{\kbracket{\ktranspose{\G(\p_1)}\ldots\ktranspose{\G(\p_\NPoint)}}},
\end{align}
and
\begin{align}
	\JacSkel=\begin{bmatrix} \J_1(\p_1)&\ldots&\J_\ndof(\p_1)\\\vdots&\ddots&\vdots\\\J_1(\p_\NPoint)&\ldots&\J_\ndof(\p_\NPoint)\end{bmatrix}.
\end{align}
$\boldsymbol{\Gamma}$ and $\JacSkel$ are respectively $3\NPoint\times 6$ and $3\NPoint\times\ndof$ matrices. For notation convenience, we do not specify that they are function of the argument tuple $(\p_1,\ldots,\p_\NPoint)$. The matrix $\JacSkel$ is the Jacobian of the kinematic skeleton structure and is non-dense. For a given point, the non-zero row-wise elements correspond to the parent angular degrees of freedom. $\boldsymbol{\Gamma}$ is a dense matrix since any differential rigid motion modifies the location of any skeleton's point.  
\begin{lemma}
	\begin{enumerate}
		\item If $\NPoint=1$, then the null space of $\boldsymbol{\Gamma}$ is straightforward: 
		\begin{align}
			\ker(\boldsymbol{\Gamma}) = \spann\kparen{\ktranspose{\kbracket{-\ktranspose{\hat{\p}}\, \ktranspose{\I}}}}.
		\end{align}	
		\item If $\NPoint=2$, let us consider $\p_1=\ktranspose{\kbracket{x_1\,y_1\,z_1}}$ and $\p_2=\ktranspose{\kbracket{x_2\,y_2\,z_2}}$ with $z_1\neq z_2$, then
		\begin{align}
			\ker\kparen{\boldsymbol{\Gamma}} = \spann\kparen{\frac{1}{z_2-z_1}\ktranspose{\kbracket{\ktranspose{-\kparen{\p_2\times\p_1}}\ \ktranspose{\kparen{\p_2-\p_1}}}}}.
		\end{align}
		\item If $\NPoint\geq 3$ such that there are non-aligned three skeleton's points, then the null space is trivial
		\begin{align}
			\ker\kparen{\boldsymbol{\Gamma}} = \emptyset.
		\end{align}
	\end{enumerate}
\end{lemma}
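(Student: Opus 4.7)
The plan is to unify all three cases by first characterizing a generic element of $\ker(\boldsymbol{\Gamma})$. A vector $\ktranspose{\kbracket{\ktranspose{\v}\ \ktranspose{\w}}}\in\Rbb^6$ lies in $\ker(\boldsymbol{\Gamma})$ if and only if $\v+\hat{\p}_i\,\w=\0$ for every $i\in\kbracket{\NPoint}$. I will exploit three elementary properties of the skew operator $\hat{\cdot}$ as defined in the paper: it is linear in $\p$, so $\hat{\p}_j-\hat{\p}_i=\widehat{\p_j-\p_i}$; it annihilates its own argument, $\hat{\p}\,\p=\0$; and for $\q\neq\0$ the $3\times 3$ matrix $\hat{\q}$ has rank $2$ with one-dimensional kernel $\spann\kparen{\q}$, which follows from the identity $\hat{\q}\,\x=\x\times\q$ under the sign convention used here.

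For Case 1 ($\NPoint=1$), I would solve the single constraint as $\v=-\hat{\p}\,\w$, thereby parametrizing the kernel by $\w\in\Rbb^3$. Letting $\w$ range over the canonical basis produces three independent vectors whose span is the column space of the $6\times 3$ matrix displayed in the lemma (the sign inside the top block and the transpose on $\hat{\p}$ being reconciled by $\hat{\p}^\mathsf{T}=-\hat{\p}$).

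For Case 2 ($\NPoint=2$), I would subtract the two constraints to get $\widehat{\p_2-\p_1}\,\w=\0$, which, since $\p_1\neq\p_2$ (ensured by $z_1\neq z_2$), forces $\w=\beta(\p_2-\p_1)$ for some scalar $\beta$. Plugging back into the first constraint and using $\hat{\p}_1\p_1=\0$ yields $\v=-\beta\,\hat{\p}_1\p_2$; a one-line computation shows $\hat{\p}_1\p_2=\p_2\times\p_1$ under the paper's convention, so $\v=-\beta\,(\p_2\times\p_1)$. Choosing $\beta=1/(z_2-z_1)$ normalizes this to the announced basis vector; the normalization is legitimate precisely because $z_1\neq z_2$, and the kernel is exactly one-dimensional since $\widehat{\p_2-\p_1}$ has rank $2$.

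For Case 3 ($\NPoint\geq 3$ with three non-aligned points $\p_1,\p_2,\p_3$), I would pair the first constraint with each of the other two to obtain $\widehat{\p_2-\p_1}\,\w=\widehat{\p_3-\p_1}\,\w=\0$, hence $\w\in\spann\kparen{\p_2-\p_1}\cap\spann\kparen{\p_3-\p_1}$. Non-collinearity makes this intersection trivial, so $\w=\0$ and then $\v=\0$ from the first equation. I do not expect any substantive obstacle: the argument is a chain of straightforward linear algebra, and the only nuisance is careful sign bookkeeping caused by the paper's non-standard convention $\hat{\cdot}=-[\cdot]_\times$, which is harmless when describing kernels up to scalar multiples.
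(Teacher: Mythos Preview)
Your argument is correct and complete. The three cases are handled cleanly via the elementary skew-symmetric identities you list, and the sign bookkeeping under the paper's convention $\hat{\p}\x=\x\times\p$ is consistent throughout; in particular your derivation $\v=-\beta\,\hat{\p}_1\p_2=-\beta(\p_2\times\p_1)$ and the rank-$2$ kernel argument for $\widehat{\p_2-\p_1}$ are exactly what is needed.

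The paper, however, does not actually prove this lemma: it states that parts 1) and 2) ``can be demonstrated with simple mathematical development or with a symbolic calculus software as \textsc{Matlab}'' and leaves it at that. So your approach is not so much \emph{different} from the paper's as it is a genuine proof where the paper offers only an appeal to computer verification. What your route buys is transparency and a uniform mechanism across all three cases (subtract constraints, read off the kernel of $\widehat{\p_j-\p_1}$), which also makes the geometric content visible: the kernel encodes exactly those infinitesimal rigid motions that fix every $\p_i$, namely screws about axes through all the points. The paper's appeal to symbolic software would of course confirm the formulas but gives no such insight and does not scale to the $\NPoint\geq 3$ case without a separate argument, which you supply.
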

\begin{proof}
Point 1) and 2) of the lemma can be demonstrated with simple mathematical development or with a symbolic calculs software as \textsc{Matlab}.
\end{proof}	
The proof of the lemma is straightforward and is not developed in this paper. It can be easily checked with any symbolic calculus software like Matlab. In our study, we consider that we have access to at least three non-aligned skeleton's point.


\subsection{The Differential Reprojection Constraint}
In this paper, we assume that we are able to detect and track $\NPoint$ sparse points in the skeleton where usually $2\NPoint<\ndof$.
Let us consider $\u(\p)=\ktranspose{\kparen{\frac{x}{z},\frac{y}{z}}}$ as the perspective projection of a 3D skeleton's point $\p=\ktranspose{\kparen{x,y,z}}$ (without taking into account camera intrinsics).
The differential motion of the 2D projection caused by the differential motion of the 3D point can be written as the following linear mapping 
\begin{align}
	\dot{\u}(\p) &= \kbracket{\begin{array}{ccc} \frac{1}{z}&0&-\frac{x}{z^2}\\0&\frac{1}{z}&-\frac{y}{z^2}\\\end{array}} \dot{\p}\\
	&=\M(\p)\,\dot{\p}.
\end{align}
For a given set of $\NPoint$ skeleton's points

\begin{align}
	\label{eq:motion1}
	\begin{bmatrix}
		\dot{\u}(\p_1)\\\vdots\\\dot{\u}(\p_\NPoint)
	\end{bmatrix}
	=
	\MOp\,\kbracket{
		\bf{\boldsymbol{\Gamma}}\quad\JacSkel}
	\,\ktranspose{\kbracket{\ktranspose{\RigMot}\,\ktranspose{\Depl}}},
\end{align}
where $\MOp$ is a bloc diagonal matrix 
\begin{align}
	\MOp=\begin{bmatrix} \M(\p_1)&&\\&\ddots&\\&&\M(\p_\NPoint)\\\end{bmatrix}.
\end{align}
For convenience, we rewrite \ref{eq:motion1} and adopt the following contracted expression
\begin{align}
	\Obs=\MOp\,
	\bf{\boldsymbol{\Gamma}}\,\RigMot+\MOp\,\JacSkel
	\,\Depl,
\end{align}
where $\Obs$ is the vector of all observed 2D differential displacements.
\begin{lemma}
	Let us consider $\kbrace{\e_i}_{i=1}^\NPoint$ as denoting the canonical orthonormal basis of $\Rbb^\NPoint$. For $\NPoint$ sekeleton points, the null space of $\MOp$ is of dimension $\NPoint$ and is defined as
	\begin{align}
		\ker\kparen{\MOp}=\spann\kparen{\kbracket{\e_i\otimes\p_i}_{i=1}^{\NPoint}}.
	\end{align}
\end{lemma}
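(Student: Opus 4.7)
The plan is to exploit the block-diagonal structure of $\MOp$ to reduce the claim to a rank computation on a single block $\M(\p_i)$. Because $\MOp$ is block-diagonal, a vector $\w\in\Rbb^{3\NPoint}$ lies in $\ker(\MOp)$ if and only if, after splitting $\w$ into $\NPoint$ subvectors $\w_i\in\Rbb^3$ of length three, each $\w_i$ lies in $\ker(\M(\p_i))$. Consequently $\ker(\MOp)=\bigoplus_{i=1}^{\NPoint}\ker(\M(\p_i))$, where each direct summand is embedded into $\Rbb^{3\NPoint}$ by placing the subvector $\w_i$ in the $i$-th block and zeros elsewhere. In the tensor-product notation of the statement, this embedding is exactly $\w_i\mapsto\e_i\otimes\w_i$.

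Next I would determine $\ker(\M(\p_i))$ for a single point. A direct substitution yields
\begin{align*}
\M(\p_i)\,\p_i = \ktranspose{\kparen{\tfrac{x_i}{z_i}-\tfrac{x_iz_i}{z_i^2},\;\tfrac{y_i}{z_i}-\tfrac{y_iz_i}{z_i^2}}}=\0,
\end{align*}
so $\p_i\in\ker(\M(\p_i))$. To conclude that this kernel is exactly one-dimensional, I would note that the leading $2\times 2$ submatrix of $\M(\p_i)$ equals $\frac{1}{z_i}\I_2$, which is invertible under the standing assumption $z_i\neq 0$ inherited from the perspective projection model. Hence $\M(\p_i)$ has full row rank $2$, forcing $\dim\ker(\M(\p_i))=3-2=1$ by the rank-nullity theorem, and therefore $\ker(\M(\p_i))=\spann(\p_i)$.

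Combining the two steps gives that $\ker(\MOp)$ is a direct sum of $\NPoint$ one-dimensional subspaces and is spanned by the embedded generators $\{\e_i\otimes\p_i\}_{i=1}^{\NPoint}$. Linear independence of these generators is immediate because, viewed as vectors in $\Rbb^{3\NPoint}$, they have pairwise disjoint supports, so the dimension count $\NPoint$ follows automatically. The argument is essentially a direct computation; I do not anticipate a real technical obstacle. The only point requiring care is fixing the convention for the Kronecker product so that $\e_i\otimes\p_i$ is indeed the concatenated vector whose $i$-th three-block is $\p_i$ and whose other blocks vanish, and making explicit the hypothesis $z_i\neq 0$ that underlies the perspective projection matrix $\M(\p_i)$.
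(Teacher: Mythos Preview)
Your proof is correct. The paper does not actually supply a formal proof of this lemma; immediately after the statement it only offers the geometric interpretation that ``sliding the skeleton's points along the sightlines does not change their projection on the image plane.'' Your argument is precisely the algebraic verification of that sentence: the block-diagonal reduction together with the rank-$2$ observation on each $\M(\p_i)$ shows that $\ker(\M(\p_i))=\spann(\p_i)$, which is exactly the sightline through $\p_i$. So your route is not different from the paper's in spirit---it simply fills in the computation the paper leaves implicit.
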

Where $\otimes$ denotes the Kronecker product. This null space formalizes the fact that sliding the skeleton's points along the sightlines does not change their projection on the image plane. As will be seen, in studying the exact recovery condition it is important to know if there exists some rigid differential motion of given skeleton's point that does not change their reprojection onto the image. This statement is formalized by the following proposition.
\begin{proposition}
	If $\NPoint\geq 3$ such that there are no skeleton's points aligned, then the null space $\ker\kparen{\MOp\Gamma}$ is trivial.
\end{proposition}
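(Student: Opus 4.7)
The plan is to reduce the question to a geometric statement about instantaneous rigid motions and sight lines, and then carry out a short linear-algebra argument on three non-aligned points. Since $\MOp$ is block diagonal, I first rewrite the condition $\MOp\,\boldsymbol{\Gamma}\,\RigMot=\0$ as $\boldsymbol{\Gamma}\,\RigMot\in\ker(\MOp)$. The preceding lemma gives $\ker(\MOp)=\spann\kparen{\kbracket{\e_i\otimes\p_i}_{i=1}^{\NPoint}}$, so this is equivalent to the existence of scalars $\lambda_1,\ldots,\lambda_{\NPoint}\in\Rbb$ with
\begin{equation*}
\G(\p_i)\,\RigMot \;=\; \mathbf{v}+\w\times\p_i \;=\; \lambda_i\,\p_i,\qquad i=1,\ldots,\NPoint.
\end{equation*}
Geometrically this is the familiar sight-line ambiguity: the instantaneous rigid velocity at each observed point must lie along the line joining the point to the camera centre. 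The proposition therefore reduces to the claim: if three of the $\p_i$ are non-aligned, the only rigid motion $(\mathbf{v},\w)$ with this property is the zero motion.

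\textbf{Main argument.}
Assume without loss of generality that $\p_1,\p_2,\p_3$ are the three non-aligned points and write the three vector equations
\begin{equation*}
\mathbf{v}+\w\times\p_i=\lambda_i\,\p_i,\qquad i=1,2,3.
\end{equation*}
Subtracting equation $1$ from equations $2$ and $3$ I get
\begin{equation*}
\w\times(\p_i-\p_1)=\lambda_i\p_i-\lambda_1\p_1,\qquad i=2,3.
\end{equation*}
Taking the dot product of each of these with $\w$ eliminates the left-hand side and yields $\lambda_i(\w\cdot\p_i)=\lambda_1(\w\cdot\p_1)$ for $i=2,3$. In parallel, taking the dot product of the original equation with $\p_i$ gives $\mathbf{v}\cdot\p_i=\lambda_i\|\p_i\|^2$, since $(\w\times\p_i)\cdot\p_i=0$. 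I then split into two cases. If $\w=\0$, then $\mathbf{v}=\lambda_i\p_i$ for $i=1,2,3$; if any two $\lambda_i$ were nonzero, the corresponding $\p_i$ would be proportional, and if all three $\lambda_i$ were nonzero then $\p_1,\p_2,\p_3$ would be collinear with the origin, contradicting the non-alignment hypothesis. Hence all $\lambda_i=0$ and $\mathbf{v}=\0$.

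\textbf{The remaining case.}
The harder case is $\w\neq\0$. Here I intend to take the cross product of each equation with $\p_i$ and use the BAC--CAB identity to obtain
\begin{equation*}
\p_i\times\mathbf{v}+\|\p_i\|^2\w-(\w\cdot\p_i)\,\p_i=\0,\qquad i=1,2,3,
\end{equation*}
a system of nine scalar equations in the six unknowns $(\mathbf{v},\w)$. Eliminating $\mathbf{v}$ between pairs of these equations gives $\w$ written as a linear combination of the differences $\p_j-\p_i$ (whose span is two-dimensional precisely because the three points are not collinear) while the constraint $\w\cdot\p_i=\mu_i$ obtained above forces $\w$ to satisfy three independent scalar conditions determined by non-aligned points. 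Combining these two facts yields $\w=\0$, which reduces the problem to the previous case and concludes $\RigMot=\0$.

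\textbf{Main obstacle.}
The routine verification is the case $\w\neq\0$: one must carefully check that the coefficient matrix of the linear system derived from $\p_i\times\mathbf{v}+\|\p_i\|^2\w-(\w\cdot\p_i)\p_i=\0$ has full rank $6$ exactly under the non-alignment assumption, i.e.\ that the only additional algebraic obstruction to injectivity of $\MOp\,\boldsymbol{\Gamma}$ is three points lying on a common line (through or not through the camera centre). This rank analysis is the one place where the geometric hypothesis enters in a non-trivial way, and it is the step I would be most careful about when writing out the full proof.
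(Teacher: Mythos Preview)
Your reduction is correct and cleaner than what the paper does: the paper does not give an analytic proof at all, but appeals to the PnP literature (Haralick et al.) for $\NPoint>3$, asserts that the differential case with $\NPoint=3$ ``discards ambiguities due to wide rigid motion'', and finally invites the reader to verify the statement with symbolic software. So your direct geometric argument --- rewriting $\MOp\boldsymbol{\Gamma}\RigMot=\0$ as $\mathbf{v}+\w\times\p_i=\lambda_i\p_i$ via the explicit description of $\ker(\MOp)$ --- is a genuinely different and more informative route, and your treatment of the case $\w=\0$ is complete.

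The gap is in the $\w\neq\0$ case, and it is not just a missing verification but a concrete mis-step in your outline. You propose to ``eliminate $\mathbf{v}$ between pairs'' of the equations
\[
\p_i\times\mathbf{v}+\|\p_i\|^2\w-(\w\cdot\p_i)\p_i=\0,
\]
but subtracting two of these gives $(\p_i-\p_j)\times\mathbf{v}+(\|\p_i\|^2-\|\p_j\|^2)\w-(\w\cdot\p_i)\p_i+(\w\cdot\p_j)\p_j=\0$, which still contains $\mathbf{v}$; and solving any single equation for $\mathbf{v}$ only determines it modulo $\spann(\p_i)$. Consequently your claim that $\w$ ends up as a linear combination of the $\p_j-\p_i$ does not follow from what you wrote, and the subsequent ``three independent scalar conditions'' argument is not grounded. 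A workable repair is to go back to the first-order equations themselves: subtracting $\mathbf{v}+\w\times\p_1=\lambda_1\p_1$ from the other two gives $\w\times(\p_i-\p_1)=\lambda_i\p_i-\lambda_1\p_1$ for $i=2,3$, so each $\lambda_i\p_i-\lambda_1\p_1$ lies in $\w^{\perp}$; combining this with $\mathbf{v}\cdot\w=\lambda_i(\w\cdot\p_i)$ (constant in $i$) lets you run a short case analysis on whether $\mathbf{v}\cdot\w$ vanishes, and it is there --- not in the cross-product system --- that non-collinearity of $\p_1,\p_2,\p_3$ forces $\w=\0$. Once you have carried out that case split carefully, you will have a self-contained proof that the paper itself does not supply.
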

The proof can be understood by simply seeing that there is no differential rigid motion of three points or more that keeps the same reprojection. This question is nothing but the PnP problem where it was stated in \cite{haralick94} that for $\NPoint> 3$, there is no rigid transformation  that can give similar perspective projection. When $\NPoint=3$, the null space is still trivial since the differential motion allows us to discard ambiguities due to wide rigid motion. For $\NPoint\leq 2$, the null space is non-trivial and this case is not addressed because it is unusual and not interesting in practice. The proof of the lemma is straightforward and is not developed in this paper.	It can be easily checked with any symbolic calculus software like Matlab. In our study, we consider that we have access to at least three non-aligned skeleton's point.
%
%
\subsection{Support and Sparsity of the Angular Articulated Motion}
\begin{definition}
	We call the support of $\Depl$, the location of its non-zero elements
	\begin{align}
		\supf\triangleq&\{i |\,  \dot{\theta}_i\neq0,\ i\in\kbracket{\ndof}\}, 
	\end{align} 
	The size of the support being the number of non-zero elements of $\Depl$
	\begin{align}
		\kvbar{\supf}\triangleq\kvvbar{\Depl}_0.
	\end{align} 
\end{definition}
We denote by $\bar{\supf}$ the complement of $\supf$ in $\kbracket{\ndof}$. In the case of sparse articulated motion, the cardinal $|\supf|$ is expected to be far smaller than $|\bar{\supf}|$. 
\section{Problem Formulation}

Given a vector $\Obs\in\Rbb^{2\NPoint}$ of 2D differential motion of $\NPoint$ skeleton's points, the goal is to study the possiblities of exactly recovering 
$(\RigMot,\Depl)\in\Rbb^{6}\times\Rbb^{\ndof}$ from $\Obs=\MOp\,\bf{\boldsymbol{\Gamma}}\,\RigMot+\MOp\,\JacSkel\,\Depl$.


\subsection{Ideal Formulation (IF)}
Under the assumption of sparse articulated motion it is natural to minimize the number of non-zero elements of the differential articulated motion $\ArtMot$ under the differential reprojection constraint
\begin{align}
	\label{eq:S1-idealProb}
	\tag{IF}
	\eDepl \in &\kargmin_{\tDepl} \kvvbar{\tDepl}_0\\\nonumber
	\text{s.t. }&\Obs=\MOp\,\bf{\boldsymbol{\Gamma}}\tRigMot+\MOp\,\JacSkel\,\tDepl.
\end{align}
Where $\Depl$ refers to the (unknown) ground truth; $\eDepl$ is an estimate of $\Depl$ computed from $\Obs$; $\tDepl$ is a ``trial'' vector used to define the optimization problem. The same notations apply for $\RigMot$.

This $\ell_0$-norm formulation even ideal is non convex and the problem to solve is NP-hard \cite{foucart13}. To be able to study and solve this ideal formulation we use a convex relaxtion with $\ell_1$-norm instead. If the solution to the relaxed problem corresponds to the exact motion thus it is the same solution that solves  the ideal problem \ref{eq:S1-idealProb}.	
\subsection{Relaxed formulation (RF)}
The relaxed formulation of the \eqref{eq:S1-idealProb} can be formulated as
\begin{align}
	\label{eq:S1-RelaxedProb}
	\tag{RF}
	\eDepl \in &\kargmin_{\tDepl} \kvvbar{\tDepl}_1\\\nonumber
	\text{s.t. }&\Obs=\MOp\,\bf{\boldsymbol{\Gamma}}\tRigMot+\MOp\,\JacSkel\,\tDepl,
\end{align}

This is a convex problem since it is an $\ell_1$-minimization (convex function) with linear convex constraint. 
It has a unique solution if the feasible set is non-empty.
Further than that, we are interested in establishing specific conditions where the solution corresponds to the ground-truth articulated and rigid motion.
Deriving such conditions allows us to specify when the solution of \eqref{eq:S1-idealProb} corresponds to the solution of \eqref{eq:S1-RelaxedProb}.

\section{\label{sec-Recov-Cond}Exact Recovery Condition for (RF)}
In order to establish the exact recovery condition for \eqref{eq:S1-RelaxedProb}, we first derive an extension of the Null Space Property that we dubbed here the Projective Kinematic Space Property (PKSP). As it is demonstrated, this condition is necessary and sufficient to recover a motion vector $\Depl$ based on the sparsity prior and on its image $\Obs=\MOp\,\bf{\boldsymbol{\Gamma}}\RigMot+\MOp\,\JacSkel\,\Depl$. This property is derived from the seminal work \cite{bandeira13} and which was first derived for Shape from Elastic Template problem in \cite{Mal17}.
\subsection{The Projective Kinematic Space Property (PKSP)}
\begin{definition}
	We say that the triplet $\kparen{\MOp, \boldsymbol{\Gamma}, \JacSkel}$ satisifies the Projective Kinematic Space Property (PKSP) relative to $\supf\subset\stSupp$ if for every $\Depl\in\Rbb^\ndof\backslash\kbrace{0}$ such that $\MOp\JacSkel\,\Depl\in\spann\kparen{\MOp\boldsymbol{\Gamma}}$ and $\ker\kparen{\MOp\boldsymbol{\Gamma}}=0$, we have
	\begin{align}
		\label{eq:PKSP}
		\kvvbar{\Depl_\supf}_1<\kvvbar{\Depl_\bsupf}_1.
	\end{align}
\end{definition}	
This definition states that the PKSP is verified for a given support $\supf$ if for all the non-zero differential articulated motion that project exactly as differential rigid motion, the $\ell_1$ magnitude of differential articulated motion supported on $\supf$ is strictly dominated by the $\ell_1$ magnitude of differential articulated motion supported on the complement $\bsupf$. This definition can be extended to every support of fixed size as follows:
\begin{definition}
	\label{def:pksp-s}
	We say that the triplet $\kparen{\MOp, \boldsymbol{\Gamma}, \JacSkel}$ satisifies the PKSP of order $\supsize$ if it satisfies the PKSP relative to $\supf\subset\stSupp$ for every $\supf\in\stSupp^\supsize$.
\end{definition}

\begin{theorem}
	\label{thm:pksp-q}
	Every differential motion $\kparen{\RigMot,\Depl}\in\Rbb^6\times\Rbb^\ndof$, where the differential articulated motion $\Depl$, is supported on a set $\supf\subset\stSupp$	is the unique solution of \eqref{eq:S1-RelaxedProb} with the observed 2D motion $\Obs=\MOp\,\bf{\boldsymbol{\Gamma}}\RigMot+\MOp\,\JacSkel\,\Depl$ if and only if the triplet $\kparen{\MOp,\boldsymbol{\Gamma},\JacSkel}$ statisfies the PKSP relative to $\supf$. 
\end{theorem}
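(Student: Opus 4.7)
The plan is to mimic the classical null-space-property (NSP) proof of $\ell_1$-recovery, with the twist that here the ``kernel'' is enlarged from $\ker(\MOp\JacSkel)$ to the set of articulated-motion vectors $\boldsymbol{h}$ whose image $\MOp\JacSkel\,\boldsymbol{h}$ lies in $\spann(\MOp\boldsymbol{\Gamma})$. The hypothesis $\ker(\MOp\boldsymbol{\Gamma})=\{0\}$, which holds as soon as there are at least three non-aligned skeleton points, ensures that the rigid piece associated with such an $\boldsymbol{h}$ is unique, and this is what makes both directions go through.

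For the sufficiency direction (PKSP $\Rightarrow$ unique \eqref{eq:S1-RelaxedProb} minimizer), I fix a ground truth $(\RigMot,\Depl)$ with $\mathrm{supp}(\Depl)\subset\supf$, pick any feasible competitor $(\tRigMot,\tDepl)$ producing the same $\Obs$, and set $\boldsymbol{h} := \tDepl - \Depl$. Subtracting the two constraints yields $\MOp\JacSkel\,\boldsymbol{h} = \MOp\boldsymbol{\Gamma}(\RigMot-\tRigMot) \in \spann(\MOp\boldsymbol{\Gamma})$, so $\boldsymbol{h}$ lies in the class on which PKSP speaks. Splitting the $\ell_1$-norm along $\supf$ and $\bsupf$, using $\Depl_\bsupf = 0$, and applying the reverse triangle inequality gives
\begin{align*}
\kvvbar{\tDepl}_1 &= \kvvbar{\Depl_\supf + \boldsymbol{h}_\supf}_1 + \kvvbar{\boldsymbol{h}_\bsupf}_1 \\
&\geq \kvvbar{\Depl}_1 + \kvvbar{\boldsymbol{h}_\bsupf}_1 - \kvvbar{\boldsymbol{h}_\supf}_1,
\end{align*}
which is strictly greater than $\kvvbar{\Depl}_1$ whenever $\boldsymbol{h}\neq 0$, by PKSP. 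If instead $\boldsymbol{h}=0$, then $\tDepl=\Depl$ and the subtracted constraint reduces to $\MOp\boldsymbol{\Gamma}(\tRigMot-\RigMot)=0$, so $\ker(\MOp\boldsymbol{\Gamma})=\{0\}$ forces $\tRigMot=\RigMot$. Uniqueness of the minimizer follows.

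For the necessity direction (unique \eqref{eq:S1-RelaxedProb} minimizer $\Rightarrow$ PKSP), I argue by contrapositive. Given any $\boldsymbol{h}\in\Rbb^\ndof\setminus\{0\}$ with $\MOp\JacSkel\,\boldsymbol{h}\in\spann(\MOp\boldsymbol{\Gamma})$ but $\kvvbar{\boldsymbol{h}_\supf}_1\geq\kvvbar{\boldsymbol{h}_\bsupf}_1$, let $\boldsymbol{\eta}\in\Rbb^6$ be the unique vector satisfying $\MOp\JacSkel\,\boldsymbol{h} = \MOp\boldsymbol{\Gamma}\,\boldsymbol{\eta}$ (unique by $\ker(\MOp\boldsymbol{\Gamma})=\{0\}$). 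Take as ground truth $\Depl := \boldsymbol{h}_\supf$ (supported on $\supf$), an arbitrary $\RigMot$, and the induced observation $\Obs := \MOp\boldsymbol{\Gamma}\RigMot + \MOp\JacSkel\,\boldsymbol{h}_\supf$. The competitor $\tDepl := -\boldsymbol{h}_\bsupf$ together with $\tRigMot := \RigMot + \boldsymbol{\eta}$ is then feasible for $\Obs$ by a direct substitution, and distinct from $(\RigMot,\Depl)$ because $\boldsymbol{h}\neq 0$ forces at least one of $\boldsymbol{h}_\supf,\boldsymbol{h}_\bsupf$ to disagree with the ground-truth pair. The assumption $\kvvbar{\tDepl}_1 = \kvvbar{\boldsymbol{h}_\bsupf}_1 \leq \kvvbar{\boldsymbol{h}_\supf}_1 = \kvvbar{\Depl}_1$ then contradicts $\Depl$ being the unique \eqref{eq:S1-RelaxedProb} minimizer.

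The main obstacle is handling the rigid component cleanly: unlike the standard NSP argument where only the sparse variable enters, here one must show in one direction that the rigid offset between any two feasible pairs is forced and does not give extra room to shrink $\kvvbar{\tDepl}_1$, and in the other that the competitor can always be closed up by adjusting $\tRigMot$. Both sides hinge on the same fact that $\ker(\MOp\boldsymbol{\Gamma})=\{0\}$ turns the rigid offset into a uniquely determined $\boldsymbol{\eta}$, thereby collapsing the joint problem to a pure $\ell_1$-NSP-style inequality on $\Depl$ to which the definition of PKSP is exactly tailored.
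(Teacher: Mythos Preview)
Your proof is correct and follows essentially the same null-space-property argument as the paper. The only cosmetic difference is that you prove the sufficiency direction directly (showing $\kvvbar{\tDepl}_1>\kvvbar{\Depl}_1$ for every distinct feasible competitor via the reverse triangle inequality), whereas the paper argues both implications by contrapositive; the necessity direction and the underlying construction $(\Depl,\tDepl)=(\boldsymbol{h}_\supf,-\boldsymbol{h}_\bsupf)$ match the paper's exactly.
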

It is then necessary and sufficient to fit the PKSP condition for any support size to be able to recover exactly the differential motion from the differential image motion.	In other words, if the ambiguous situations between projected differential rigid motion and projected differential articulated motion are such that the absolute magnitude of $\supsize$ elements from differential articulated motion  is strictly dominated by the complement amount of magnitude then any $\supsize$-sparse differential articulated motion can be exactly recoverd with \eqref{eq:S1-RelaxedProb}.
\begin{proof}
	We demonstrate the theorem by the contrapositive of each implication. First let us proof that non-PKSP relative to  a given $\supf\subset\stSupp$ implies non-exact recovery. Let us consider the existence of a given $\Depl\in\Rbb^\ndof\backslash\kbrace{0}$ such that $\MOp\JacSkel\,\Depl\in\spann\kparen{\MOp\boldsymbol{\Gamma}}$ and $\kvvbar{\Depl_\supf}_1\geq\kvvbar{\Depl_\bsupf}_1$. Let $\x=\Depl_\supf$ and $\bar{\x}=-\Depl_\bsupf$, then there exists $\z$ and $\bar{\z}$ in $\Rbb^6$ such that $\MOp\JacSkel(\x-\bar{\x})=\MOp\boldsymbol{\Gamma}(\z-\bar{\z})$. We arrange the terms so that we obtain $\MOp\boldsymbol{\Gamma}\bar{\z}+\MOp\JacSkel\bar{\x}=\MOp\boldsymbol{\Gamma}\z+\MOp\JacSkel\x$. It appears that $\x$, which is supported on $\supf$, is not the minimizer that we are seeking from the 2D differential motion $\Obs=\MOp\boldsymbol{\Gamma}\z+\MOp\JacSkel\x$. Indeed, $\bar{\x}$ is the minimizer since $\kvvbar{\x}_1\geq\kvvbar{\bar{\x}}_1$. Second, let us proof that for a given support $\supf$, non-exact recovery implies non-PKSP. Let us assume $(\z,\x)\in\Rbb^6\times\Rbb^\ndof$ such that $\x$ is supported on $\supf$. Let us consider $(\bar{\z},\bar{\x})\in\Rbb^6\times\Rbb^\ndof$ such that $\bar{\x}$ is not supported on $\supf$. According to the contrapositive assumption we can have  $\MOp\JacSkel\bar{\x}+\MOp\boldsymbol{\Gamma}\bar{\z}=\MOp\JacSkel\x+\MOp\boldsymbol{\Gamma}\z$ with $\kvvbar{\x}_1>\kvvbar{\bar{\x}}_1$. Let us denote $\w=\x-\bar{\x}$ which gives $\MOp\JacSkel \w\in\spann(\MOp\boldsymbol{\Gamma})$. In order to demonstrate that the triplet $\kparen{\MOp,\boldsymbol{\Gamma},\JacSkel}$ does not fill the PKSP condition we have to demonstrate that $\kvvbar{\w_\supf}_1\geq\kvvbar{\w_\bsupf}_1$ which is obtained as follows
	\begin{align}
		\kvvbar{\w_\bsupf}_1&=\kvvbar{\bar{\x}_\bsupf}_1,\ \textrm{since $\x_\bsupf=\0$},\nonumber\\
		&=\kvvbar{\bar{\x}_\bsupf}_1+\kvvbar{\bar{\x}_\supf-\bar{\x}_\supf+\x}_1-\kvvbar{\x}_1\nonumber\\
		&\leq\kvvbar{\bar{\x}_\bsupf}_1+\kvvbar{\bar{\x}_\supf}_1+\kvvbar{\x-\bar{\x}_\supf}_1-\kvvbar{\x}_1\nonumber\\
		&\leq\kvvbar{\bar{\x}}_1-\kvvbar{\x}_1+\kvvbar{\w_\supf}_1\nonumber\\
		&\leq\kvvbar{\w_\supf}_1,\ \textrm{since $\kvvbar{\bar{\x}}_1-\kvvbar{\x}_1\leq 0$},
	\end{align}
	if we recover the exact differential kinematic motion $\Depl$, then the exact differential rigid motion $\rho$ is recovered under the necessary and sufficient condition of $\ker\kparen{\MOp\boldsymbol{\Gamma}}=0$. Thus, it can be exactly determined by solving the equation $\MOp\boldsymbol{\Gamma}\,\rho=\Obs-\MOp\JacSkel\,\Depl$.
\end{proof}	
This exact recovery condition relative to a support $\supf$ can be extended to any support of a given size $\supsize$.
\begin{theorem}
	\label{theorem2}
	Every differential motion $\kparen{\RigMot,\Depl}\in\Rbb^6\times\Rbb^\ndof$, where the differential DoFs motion $\Depl$ is an $\supsize$-sparse vector, is the unique solution of \eqref{eq:S1-RelaxedProb} with the observed image motion $\Obs=\MOp\,\bf{\boldsymbol{\Gamma}}\RigMot+\MOp\,\JacSkel\,\Depl$ if and only if the triplet $\kparen{\MOp,\boldsymbol{\Gamma},\JacSkel}$ statisfies the PKSP of order $\supsize$. 
\end{theorem}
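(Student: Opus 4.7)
The plan is to derive Theorem~\ref{theorem2} directly from Theorem~\ref{thm:pksp-q} together with Definition~\ref{def:pksp-s}, via a short bookkeeping argument over the collection of admissible supports. Because the PKSP of order $\supsize$ is defined as the PKSP holding relative to every $\supf\in\stSupp^\supsize$, the bulk of the work is already contained in the support-specific Theorem~\ref{thm:pksp-q}; what remains is to translate between a fixed-support statement and a fixed-sparsity statement.

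The one auxiliary ingredient I would record first is a monotonicity observation: if PKSP holds relative to $\supf'$ and $\supf\subset\supf'$, then PKSP also holds relative to $\supf$. The proof is immediate from the definition, since $\supf\subset\supf'$ yields $\kvvbar{\Depl_\supf}_1\leq \kvvbar{\Depl_{\supf'}}_1$ and $\kvvbar{\Depl_{\bar\supf}}_1\geq\kvvbar{\Depl_{\bar{\supf'}}}_1$, so the strict inequality \eqref{eq:PKSP} is preserved when passing from $\supf'$ to its subset $\supf$. This lemma lets me upgrade PKSP for all supports of size exactly $\supsize$ to PKSP for all supports of size at most $\supsize$, which is what an $\supsize$-sparse prior genuinely requires.

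For the sufficient direction ($\Leftarrow$), I would assume the triplet $\kparen{\MOp,\boldsymbol{\Gamma},\JacSkel}$ satisfies PKSP of order $\supsize$ and fix an arbitrary differential motion $(\RigMot,\Depl)$ with $\Depl$ being $\supsize$-sparse. Let $\supf=\supp[\Depl]$, so that $\kvbar{\supf}\leq\supsize$. Choose any extension $\supf'\in\stSupp^\supsize$ with $\supf\subseteq\supf'$; by hypothesis PKSP holds relative to $\supf'$, and by the monotonicity lemma it also holds relative to $\supf$. Theorem~\ref{thm:pksp-q} then ensures that $(\RigMot,\Depl)$ is the unique solution of \eqref{eq:S1-RelaxedProb} given the observation $\Obs=\MOp\boldsymbol{\Gamma}\RigMot+\MOp\JacSkel\,\Depl$.

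For the necessary direction ($\Rightarrow$), I would argue by contrapositive. Suppose PKSP of order $\supsize$ fails, so there exists $\supf\in\stSupp^\supsize$ for which PKSP relative to $\supf$ does not hold. Since $\kvbar{\supf}=\supsize$, any vector $\Depl$ supported on $\supf$ is $\supsize$-sparse; applying the ``only if'' half of Theorem~\ref{thm:pksp-q} produces a motion supported on $\supf$ that is not uniquely recovered by \eqref{eq:S1-RelaxedProb}, contradicting exact recovery of every $\supsize$-sparse vector. Combining the two directions yields the stated equivalence, and exact recovery of $\RigMot$ from the residual $\Obs-\MOp\JacSkel\,\Depl$ follows exactly as in the proof of Theorem~\ref{thm:pksp-q} under $\ker\kparen{\MOp\boldsymbol{\Gamma}}=0$. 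I do not foresee a genuine obstacle here: the work has been done in Theorem~\ref{thm:pksp-q}, and the main care point is simply being explicit about the ``at most $\supsize$'' reading of sparsity and applying the subset-monotonicity of PKSP to close the small gap between fixed-size supports and $\supsize$-sparse vectors.
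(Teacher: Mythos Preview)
Your proposal is correct and follows essentially the same route as the paper: both reduce Theorem~\ref{theorem2} to the support-specific Theorem~\ref{thm:pksp-q} via Definition~\ref{def:pksp-s}. In fact your argument is more complete than the paper's own proof, which only spells out the sufficiency direction and does not explicitly address the distinction between ``support of size exactly $\supsize$'' and ``$\supsize$-sparse''; your monotonicity lemma and contrapositive for necessity close gaps the paper leaves implicit.
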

\begin{proof}
	According to definition \ref{def:pksp-s}, for every $\supf\subset\stSupp^\supsize$, $\kparen{\MOp,\boldsymbol{\Gamma},\JacSkel}$ satisfies the PKSP relative to $\supf$. By theorem \ref{thm:pksp-q}, for every such support $\supf$, every differential motion $\kparen{\RigMot,\Depl}$, where $\Depl$ is supported on $\supf$, is exactly recovered from \eqref{eq:S1-RelaxedProb}.
\end{proof}
\subsection{Exact Recovery for (IF)\label{sec:RF-2-IF}}
\begin{proposition}
	\label{proposition1}
	Under the condition of theorem \ref{theorem2} every exact differential motion that is recovered via problem (RF) is also the unique exact solution of problem (IF). 
\end{proposition}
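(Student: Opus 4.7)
My plan is a short contradiction argument that leverages the uniqueness portion of Theorem~\ref{theorem2} twice. Let $(\RigMot,\Depl)\in\Rbb^6\times\Rbb^\ndof$ be the ground truth that \eqref{eq:S1-RelaxedProb} recovers, set $s=\kvvbar{\Depl}_0$, and write $\Obs=\MOp\,\boldsymbol{\Gamma}\,\RigMot+\MOp\,\JacSkel\,\Depl$. I first observe that $(\RigMot,\Depl)$ is feasible for \eqref{eq:S1-idealProb}, so the minimum value of $\kvvbar{\cdot}_0$ over the feasible set is at most $s$.

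Next, suppose $(\tRigMot,\tDepl)$ is any minimizer of \eqref{eq:S1-idealProb}; then $\kvvbar{\tDepl}_0\leq s$, so $\tDepl$ is $s$-sparse. Since both $\Depl$ and $\tDepl$ produce the same observation $\Obs$, I would apply Theorem~\ref{theorem2} twice: once with the ground truth taken to be $(\RigMot,\Depl)$, which forces the unique \eqref{eq:S1-RelaxedProb} solution for $\Obs$ to be $\Depl$, and once with the ground truth taken to be $(\tRigMot,\tDepl)$, which likewise forces the unique \eqref{eq:S1-RelaxedProb} solution for $\Obs$ to be $\tDepl$. Combining the two applications gives $\tDepl=\Depl$, since \eqref{eq:S1-RelaxedProb} is a convex program with a unique optimum under the PKSP of order $s$.

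With $\tDepl=\Depl$ established, the constraint $\MOp\,\boldsymbol{\Gamma}\,\tRigMot+\MOp\,\JacSkel\,\tDepl=\MOp\,\boldsymbol{\Gamma}\,\RigMot+\MOp\,\JacSkel\,\Depl$ collapses to $\MOp\,\boldsymbol{\Gamma}\,(\tRigMot-\RigMot)=\0$, and the triviality of $\ker(\MOp\,\boldsymbol{\Gamma})$ (built into the PKSP definition and guaranteed as soon as three non-aligned landmarks are tracked) yields $\tRigMot=\RigMot$. Hence every minimizer of \eqref{eq:S1-idealProb} coincides with the ground truth, which is exactly what is returned by \eqref{eq:S1-RelaxedProb}.

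The only genuinely delicate point is the symmetric double use of Theorem~\ref{theorem2}. One must treat the hypothetical competitor $(\tRigMot,\tDepl)$ as a legitimate ground truth in its own right, which is legitimate because Theorem~\ref{theorem2} quantifies over \emph{every} $s$-sparse motion, not over a preferred one; the PKSP condition is a property of $(\MOp,\boldsymbol{\Gamma},\JacSkel)$ alone. Once that symmetry is granted, the rest is mechanical. No additional assumptions beyond those already invoked by Theorem~\ref{theorem2} are needed.
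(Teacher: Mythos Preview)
Your argument is correct and mirrors the paper's own proof: both observe that any \eqref{eq:S1-idealProb} minimizer must be $s$-sparse (since the ground truth is feasible), then invoke Theorem~\ref{theorem2} to force it to coincide with the unique \eqref{eq:S1-RelaxedProb} solution, and finally use $\ker(\MOp\boldsymbol{\Gamma})=\{0\}$ to pin down the rigid part. Your explicit framing of the ``double application'' of Theorem~\ref{theorem2} is a bit more verbose than the paper's one-line appeal to uniqueness, but the logical content is identical.
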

\begin{proof}
	Under the condition of theorem \ref{theorem2}, the recovered solution corresponds to the exact differential motion 
	$\kparen{\RigMot,\Depl}\in\Rbb^6\times\Rbb^\ndof$, where $\Depl$ is $\supsize$-sparse. By hypothesis, this differential articulated motion is exact, explains the observation and satisfies the differential kinematic body model. 
	Moreover, if $\kparen{\boldsymbol{\alpha},\boldsymbol{\beta}}\in\Rbb^6\times\Rbb^\ndof$ is the minimizer of (IF) then $\kvvbar{\boldsymbol{\beta}}_0\leq\kvvbar{\Depl}_0$ so that $\boldsymbol{\beta}$ is also $
	\supsize$-sparse. Since every $\szSupp$-sparse vector is the unique solution of (RF) with the same $\Obs=\MOp\kparen{\PseJacSkel\tForc +\kJacSkel\tkerw}$, it follows that $\boldsymbol{\beta}=\Depl$. The equality $\alpha=\RigMot$ follows from the exactness of differential articulated motion and the necessary and sufficient condition $\ker\kparen{\MOp\boldsymbol{\Gamma}}=0$.
\end{proof}		

We assume that we have one point per link limb. Which means, we have for the left/right sides: hips, shoulders, elbows, wrists, knees and ankles.
If one point is missing then the question that arises is whether or not its direct parent joint can be recovered. 

\section{Implementation}
\noindent\textbf{Algorithmic. }The method implementation and evaluation used \textsc{Matlab R2017a} running upon a \textsc{MAC Pro} (\textsc{OS X $10.13.4$}) with \textsc {Intel Core i5} running at $2\times2.66$ GHz with $16$ GB 1600 MHz DDR3 memory. The resolution of the constrained $\ell_1$ minimization part of (RF) was implemented using ADMM algorithm~\cite{wen10}. The algorithm runs at 200~fps in average upon the described hardware and software configuration. With this computational speed, the proposed approach can be used with additional pre-processing steps to stabilize and correct the 2D detected landmark detection as a whole fast end-to-end image to 3D skeleton pipeline. \\
\noindent\textbf{Kinematic skeleton. } We use a kinematic skeleton with 40 rotational angles as depicted in figure \ref{fig:skeleton40dof}. The bounds of the rotational articulation that we used are shown in appendix \ref{appendix:limit_joints}. The differential articulated motion angles were clamped to $\pm 5\deg$. \\
\noindent\textbf{Initialization. }
To initialize our algorithm with a 3D skeleton pose, we use a single image based approach \cite{zhou_towards_2017} across a couple of initial frames and then average the obtained skeleton as was suggested in \cite{mehta_monocular_2016}.\\
\noindent\textbf{Input 2D motion. }We use 12 detected skeleton's point: arms (shoulders, elbows, wrists) and legs (hips, knees, ankles).  The head is also included even if we experimentally find that the head detection is not stable from profile views. Since the porposed method is a 2D to 3D approach, we use ground-truth 2D joints for all the compared methods. In a wild context, pose detection algorithm as openpose \cite{cao_realtime_2017} can be used.\\
\noindent\textbf{Determination of limb lengths. }The initial pose estimate allows us to determine the limb lengths to reconstruct the 3D skeleton. In order to robustify this bone estimate, we run single image reconstruction on a couple of five first frames to average the bone length estimation. If the height of the person is available we can scale the limb lengths to Euclidean space. If there is no metric measure available on the skeleton, then the reconstruction is made up to a scale factor.\\
\begin{figure}
	\centering{
		\includegraphics[width=0.35\textwidth]{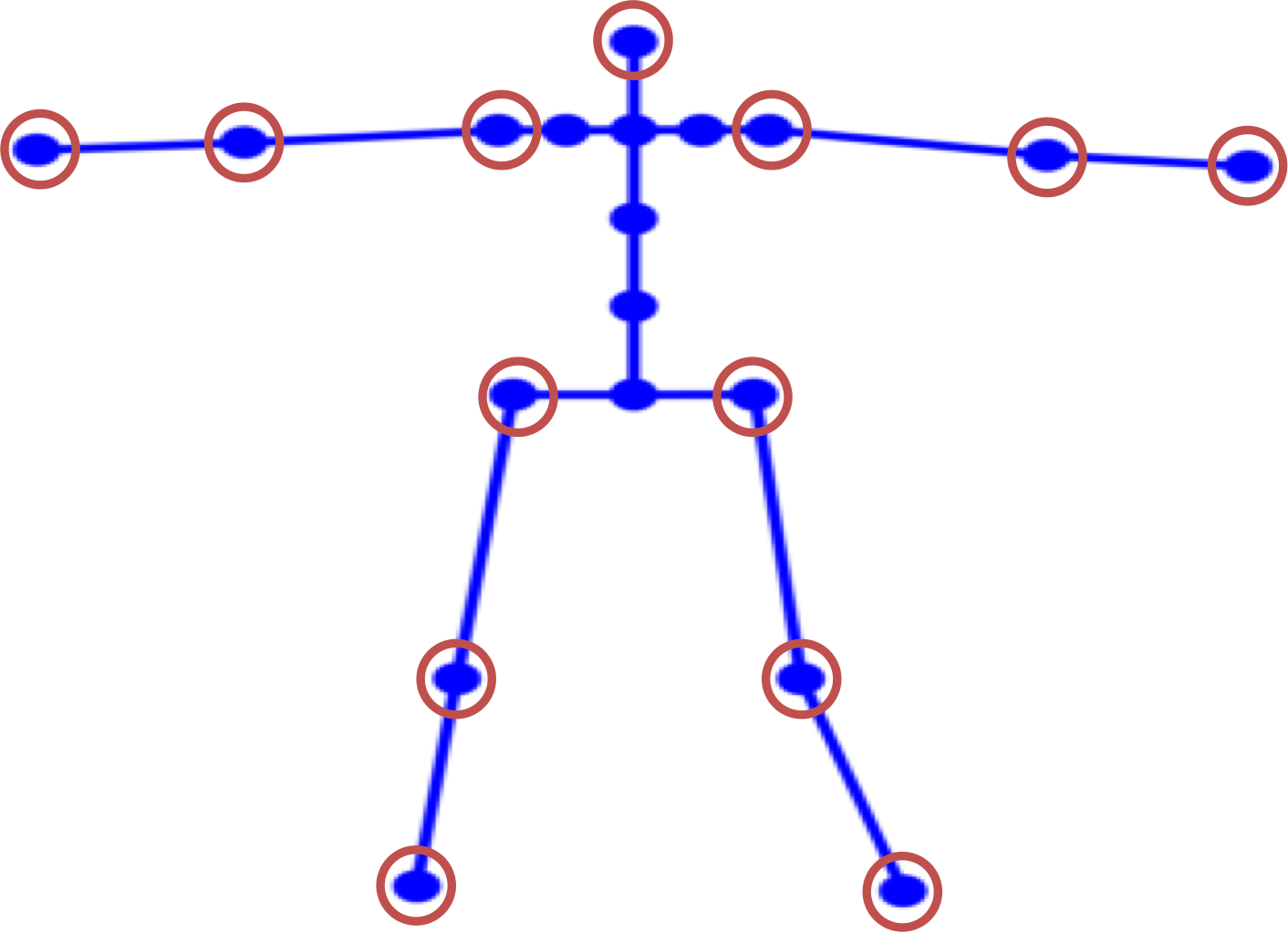}
		\caption{\label{fig:skeleton40dof} Skeleton with 40 angular articulated joints (40-DoF skeleton) which is used in our experiments. This configuration has two 3-DoF rotations in the spinal cord besides the center hip and the neck. It has two 3-DoF rotations in the mid-shoulder mimicking the scapula. The encircled joints are assumed to be detected at every frame using \textsc{openpose} \cite{cao_realtime_2017}. }}	
\end{figure}
\section{Experimental Evaluation}

\subsection{Evaluation on Synthetic Data}
To simulate plausible skeleton motion we take 20.000 frames from \Human. dataset~\cite{ionescu_human3.6m:_2014}. To cancel the noise in the kinematic modeling, we compute the inverse kinematic corresponding to each pose in every frame and obtain the articulated angles. We obtain the noise-free pose by processing back the direct kinematic model on the obtained articulated angles. To have ideal 2D projection we backproject the kinematic-noise-free poses on the corresponding frames of the original dataset. We compute the difference between successive frames in terms of rigid motion, articulated motion and 2D landmark motion to obtain synthetic ground-truth on $\RigMot$, $\ArtMot$ and $\Obs$ respectively. We take sequences from Discuss, Eat, Phone and SitDown sessions with all the subjects and all the different point of views. The experimental setup consists in varying the size of the support of the differential articulated motion and the image noise to evaluate the accuracy and the specificity of the proposed method. For a given support size, we randomly choose among the differential articulated motion the elements that do not belong to the support and set them to zero. We compute the updated pose with the direct kinematic model and compute the 2D landmark motion by projecting the successive poses and obtaining the difference. We add centered Gaussian noise on the 2D differential motion with varying standard deviation $\delta\in\{1, 2, 3, 4\}$ pixels (we remind that the focal length is equal to $1145$ in \Human.).
We run the proposed relaxed formulation \ref{eq:S1-RelaxedProb} with ADMM implementation on the constructed synthetic data to show the ability of the proposed approach to retrieve the support of the sparse differential articulated motion. We compare the results to the usage of an $\ell_2$ standard approach that can be formalized as follows
\begin{align}
	\label{eq:S1-L2}
	\tag{L2}
	\eDepl \in &\kargmin_{\tDepl} \kvvbar{\tDepl}_2\\\nonumber
	\text{s.t. }&\Obs=\MOp\,\bf{\boldsymbol{\Gamma}}\tRigMot+\MOp\,\JacSkel\,\tDepl.
\end{align}
Figure \reffig{fig:synth_data} summarizes the accuracy and the specificity scores.
It reveals that the rate of exact recovery of the locations of zeros and non-zeros in $\ArtMot$ is high up to size $9$ from a total of $40$ elements in $\ArtMot$ with a Gaussian noise of $\delta=1$ pixel. The rejection of elements wrongly classified as zeros or non-zeros is consequently very high. The specificity plot reveals that the rate of elements wrongly classified as non-zero is almost zero. It means that the proposed approach is strong to reject non-zero elements but tend to do not sufficiently reject wrong zero elements in presence of noise. The experiment with \ref{eq:S1-L2} exhibits the fact that with sparse motion, the standard $\ell_2$-norm approach is very poor in rejecting wrong zero elements (revealed in specificity plot) and thus cannot retrieve ground-truth support of sparse differential articulated motions.
\begin{figure*}[htbp]
	\includegraphics[width=0.24\textwidth]{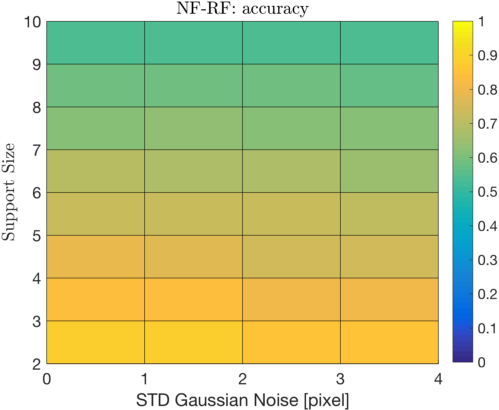}	
	\includegraphics[width=0.24\textwidth]{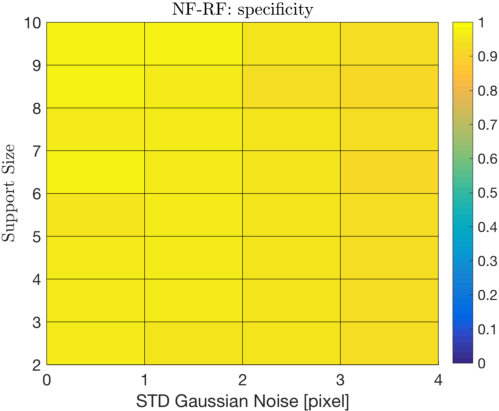}	
	\includegraphics[width=0.24\textwidth]{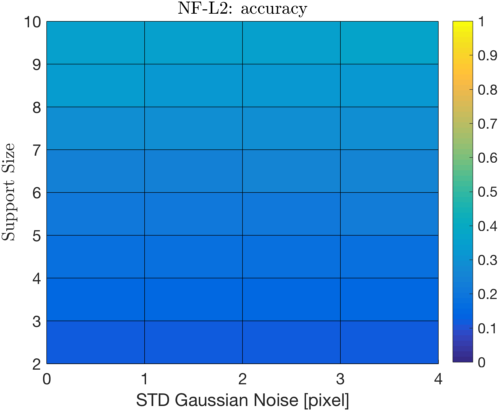}	
	\includegraphics[width=0.24\textwidth]{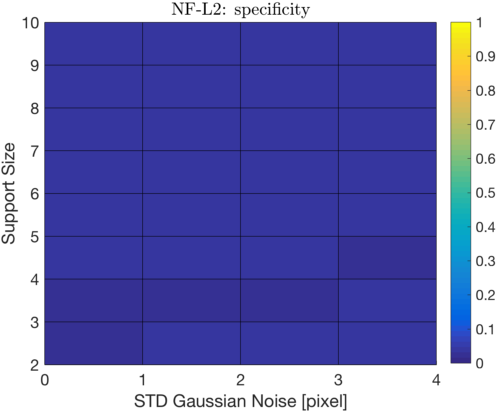}
	\caption{\label{fig:synth_data}Comparing accuracy and specificity between (\ref{eq:S1-RelaxedProb}) and (\ref{eq:S1-L2}).}	
\end{figure*}	
\subsection{Evaluation on Real Dataset}
\noindent\textbf{Dataset. }We use $3$ publicly available real dataset for the evaluation of the proposed approach: {\em{(i)}}~\Human. We evaluate on all actions for subject $9$ and $11$ \cite{ionescu_human3.6m:_2014}. The other subjects being used by the deep-learning methods for training are then not used. As was reported in previous works \cite{tome_lifting_2017,pavlakos_coarse--fine_2017,bogo_keep_2016}, we use Protocol 1 which uses all the cameras. We do not down-sample the videos and use the raw $50$ fps since it is more appropriate for assuming sparse differential articulation motion from frame-to-frame. {\em{(ii)}}~\Panoptic. We use the pose 1 video scenarios recorded at $30$ fps from HD cameras \cite{joo_panoptic_2015}. We show that at this frame rate the human motion is still slow enough to keep valid our assumption.
{\em{(iii)}}~\MPIINF. We evaluate our approach on 20,000 frames recorded at $60$ fps~\cite{mehta_monocular_2016}. 

\noindent\textbf{Compared methods.}
We use 6 state-of-the art methods with which we compare the proposed approach. These methods are either single image or image sequence based approaches as well as either deep learning or basis shapes based approaches. {\CBSS} is a single image based approach relying on basis shapes.
{\CBSV} is an image sequence based approach using also basis shapes.
{\LfD}, {\LCR} and {\SEB} are  deep learning based approaches using a single image.
{\VNECT} is a deep learning based approach using an image sequence. This method is the gold standard method with which we draw more specific comparison. The methods are all using weak perspective models, the 3D reconstructions are thus transformed to perspecitve for comparison as was done in \cite{mehta_monocular_2016}.\\
\noindent\textbf{Qualitative evaluation. }
Figures \ref{fig:qualitative-h36m2} to \ref{fig:qualitative-mpii2} show qualitative reconstructions on classic and challenging poses. We display only the results from the proposed method and \VNECT. On \Human~we display in figure \ref{fig:qualitative-h36m2} two classic poses where only part of the limb moves from frame-to-frame. On \Panoptic~we show a challenging pose where the camera is top view and the skeleton is undergoing unusual poses with bending knees. Figure \ref{fig:qualitative-panoptic2} show that the spine for instance with the proposed model is curved as the in the real pose while the reconstruction from \VNECT~shows straight spine. On \MPIINF~we show in figure \ref{fig:qualitative-mpii2} a posture where the bottom body moves.\\
\begin{table*}
	\begin{center}
		\begin{tabularx}{\textwidth}{lRRRRRRRRRRRRRRRR}
			\toprule
			Method&Direct&Discuss&Eat&Greet&Phone&Posing&Purch.&Sit&Sit Down&Smoke&Take Photo&Wait&Walk&Walk Dog&Walk Pair&All\\
			\hline
			\CBSS&87.3&110.1&85.4&102.4&118.3&107.4&100.1&127.1&201.2&105.9&141.2&117.1&80.4&115.4&98.8&114.1\\
			\SEB&63.1&75.3&64.9&61.1&92.5&94.1&106.7&113.5&92.8&110.6&76.5&87.9&68.9&122.4&53.1&74.7\\
			\LfD&64.5&74.4&75.6&85.8&86.4&68.7&97.2&110.1&173.7&84.9&111.2&85.3&72.1&92.2&73.9&87.9\\
			\LCR&76.4&81.1&74.9&83.2&91.1&80.1&72.3&106.8&126.5&87.5&105.9&82.5&64.7&86.5&83.4&86.5\\
			\hline
			\CBSV&86.1&109.7&86.5&101.1&117.1&106.1&99.8&125.2&200.1&103.9&142.1&116.2&79.3&114.4&97.5&113.5\\
			\VNECT&62.7&78.3&64.1&73.3&89.4&64.7&76.3&110.7&145.6&80.1&94.6&74.5&54.6&82.5&60.4&81.7\\
			\hline
			\OUR&65.9&83.7&80.4&86.7&101.9&88.2&91.0&\textbf{86.2}&\textbf{90.4}&86.8&81.8&82.5&80.7&87.6&78.9&\\
			\bottomrule
		\end{tabularx}
		\caption{\label{tab:human3.6M} Evaluation of the proposed method on \Human.} dataset following Protocol 1. The comparisons were performed on subjects $9$ and $11$. 
	\end{center}
\end{table*}

\begin{table}[htbp]
	\begin{tabularx}{0.49\textwidth}{lRR}
		\toprule\\
		&MPJPE&Reconst. Error\\
		\hline
		\CBSS&60.1&57.6\\
		\SEB&51.64&50.5\\
		\LfD&200.7&55.1\\
		\hline
		\CBSV&56.9&53.1\\
		\VNECT&29.3&23.9\\
		\hline
		\OUR&\textbf{26.8}&\textbf{19.8}\\
		\bottomrule
	\end{tabularx}
	\caption{\label{tab:panoptic} Evaluation of the proposed method on \Panoptic.}
\end{table}

\begin{table}[htbp]
	\begin{tabularx}{0.49\textwidth}{lRR}
		\toprule\\
		&MPJPE&Reconst. Error\\
		\hline
		\CBSS&60.1&57.6\\
		\SEB&51.64&50.5\\
		\LfD&200.7&55.1\\
		\hline
		\CBSV&56.9&53.1\\
		\VNECT&34.6&22.3\\
		\hline
		\OUR&28.9&23.14\\
		\bottomrule
	\end{tabularx}
	\caption{\label{tab:mpiinf} Evaluation of the proposed method on \MPIINF.}
\end{table}

\noindent\textbf{Joint occlusion. }
We test the effect of occluding one joint at a time during the whole sequence. We run this experiment on the \Panoptic~sequence (pose $1$, id=$141216$ with cam HD $2$). Figure~\ref{fig:occluded_joint} shows the mean and std errors (the mean being the MPJPE error). It appears that there is no specific joint for which occlusion has dramatic consequence on the skeleton reconstruction. The occlusion of end joints do not have effect on its parent joint. For instance, the occlusion of the wrist involves no motion of the elbow joint.\\
\noindent\textbf{Failure cases. } As every tracking method, failures occur after a certain number of frames. In our case, the two main reasons are the approximations in bone lengths and the location of 2D joints. These two measures can never be exactly estimated. Taking into account the noise in these two estimates could improve the horizon of tracking. This is out of the context of current paper and will be done in futur work. In these experiments we re-compute an initialzation pose when the reprojection error is above $50$ pixels. We observed that this is necessary for every 300 frames in average for a $30$ fps video.\\

\noindent\textbf{Discussion and comments. }

In our experimental validation, we have observed occasional outliers affecting the MPJPE which are mainly due to  bias in observation. Indeed, the detected joint either jitter or is not well aligned with the kinematic joint. The deep learning methods have less outliers in MPJPE because during the training step, the network learns the systematic bias between the observation and the 3D output. These methods give good reconstructions mainly  in the range of the trained data. They are  dynamically less responsive than kinematic-based model and more specifically less suited to unusual poses and motions. We also observe that deep learning methods are less good in recovering fine-grained  orientation with respect to camera.

Constant limb length prior penalizes kinematic-based approaches even if anatomically the limb length constraint is meaningful. On the one hand, ground-truth setup (either with or without markers) cannot ensure that this constraint is filled as is shown in figures \ref{fig:gbone_lengths}, which gives biased ground-truth. On the other hand, detection methods cannot provide guaranty of giving joints that are aligned with the kinematic model. In this case, a single frame approach can provide smaller errors like is the case with \cite{tome_lifting_2017}. Pre-processing on the detected joints (noise filtering, smoothing, etc) can help image sequence or video based methods to reduce the reconstruction errors \cite{mehta_vnect:_2017}.

Figure \ref{fig:fraction_incorrect_joint} draws the fraction of joints exceeding error thresholds for MPJPE as was done previously in \cite{mehta_vnect:_2017}. We can observe that the proposed method has similar rates of errors as in \VNECT~but without smoothing the 2D detected joints. This enhances the fact that sparsity of articulated skeleton's motion is justified in a fast enough frame rates besides the fact that $\ell_1$ norm minimization performs better than $\ell_2$-norm minimization in rejecting outliers. 

\section{Conclusion}
In this paper, we formalized the 3D human motion problem using sparse articulated motion as prior. We used two formulation; an ideal formulation with $\ell_0$-norm and a relaxed formulation with $\ell_1$-norm. We defined the PKSP condition (Projective Kinematic Sparse Property) that establishes the set of possible ambiguities between differential rigid and articulated motion. We established that filling condition allows us to retrieve the ground-truth 3D human, solution of the ideal formulation, only by solving the relaxed $\ell_1$-norm problem. We provided extensive results on synthetic and real datasets. We showed that the proposed formulation performs as good as major 3D human methods without extensive preprocessing on the data. As futur work, we will integrate model noise and image noise in order to robustify the reconstruction method regarding bone length estimation and joint detection.
\begin{figure*}
	\begin{tabular}{cccccc}
		\includegraphics[width=0.15\textwidth]{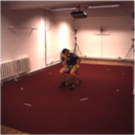}&
		\includegraphics[width=0.15\textwidth]{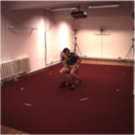}&
		\includegraphics[width=0.15\textwidth]{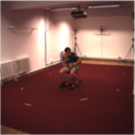}&
		\includegraphics[width=0.15\textwidth]{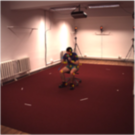}&
		\includegraphics[width=0.15\textwidth]{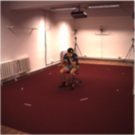}&
		\includegraphics[width=0.15\textwidth]{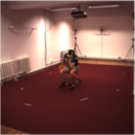}\\
		\includegraphics[width=0.15\textwidth]{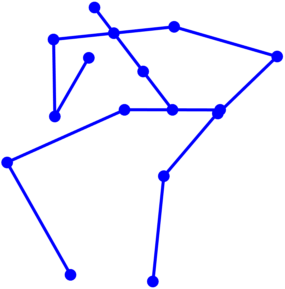}&
		\includegraphics[width=0.15\textwidth]{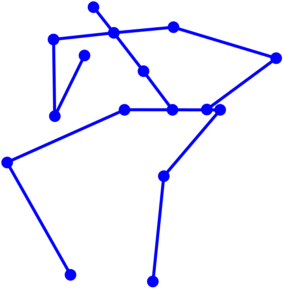}&
		\includegraphics[width=0.15\textwidth]{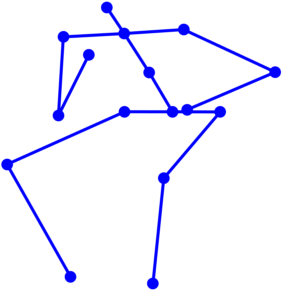}&
		\includegraphics[width=0.15\textwidth]{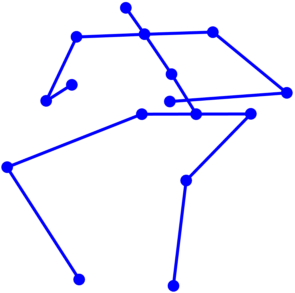}&
		\includegraphics[width=0.15\textwidth]{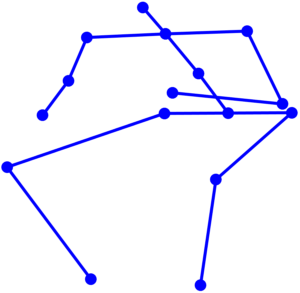}&
		\includegraphics[width=0.15\textwidth]{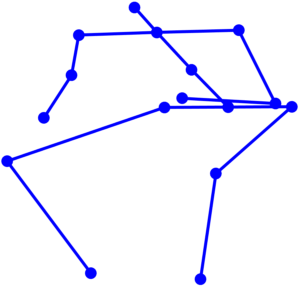}\\
		\includegraphics[width=0.15\textwidth]{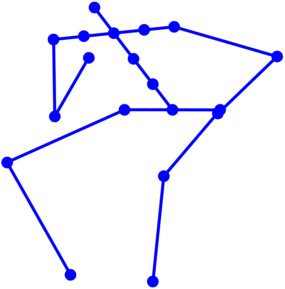}&
		\includegraphics[width=0.15\textwidth]{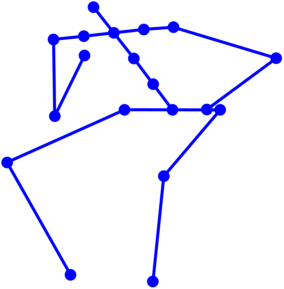}&
		\includegraphics[width=0.15\textwidth]{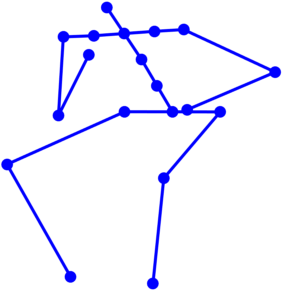}&
		\includegraphics[width=0.15\textwidth]{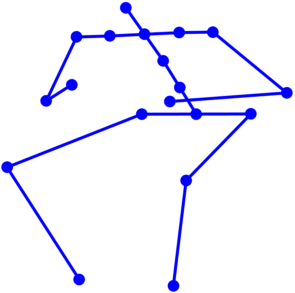}&
		\includegraphics[width=0.15\textwidth]{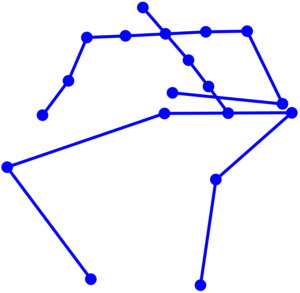}&
		\includegraphics[width=0.15\textwidth]{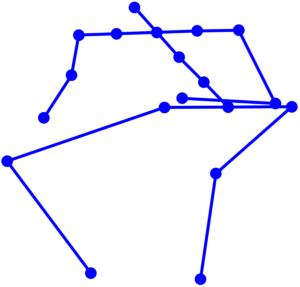}
		
	\end{tabular}
	\caption{\label{fig:qualitative-h36m2}  \Human.~3D reconstructions (Sitting).}	
\end{figure*}

\begin{figure*}
	\begin{tabular}{cccc}
		\includegraphics[width=0.25\textwidth]{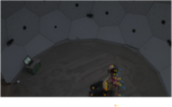}&
		\includegraphics[width=0.25\textwidth]{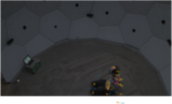}&
		\includegraphics[width=0.25\textwidth]{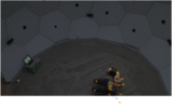}&
		\includegraphics[width=0.25\textwidth]{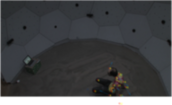}\\
		\includegraphics[width=0.15\textwidth]{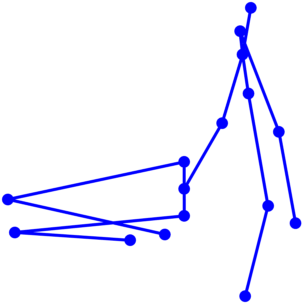}&
		\includegraphics[width=0.15\textwidth]{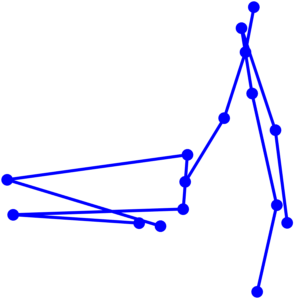}&
		\includegraphics[width=0.15\textwidth]{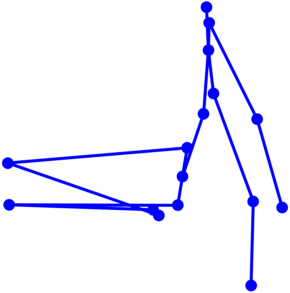}&
		\includegraphics[width=0.15\textwidth]{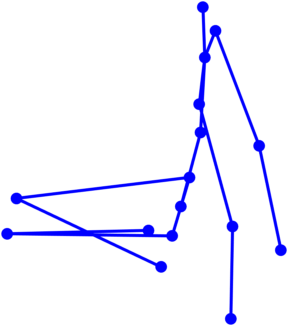}\\
		\includegraphics[width=0.15\textwidth]{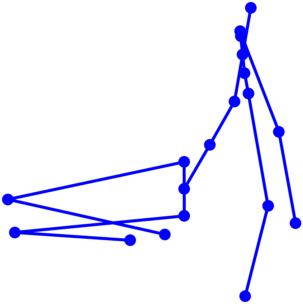}&
		\includegraphics[width=0.15\textwidth]{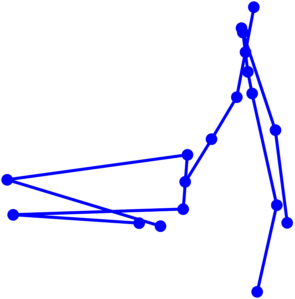}&
		\includegraphics[width=0.15\textwidth]{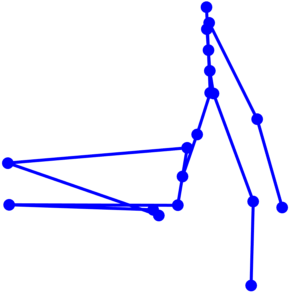}&
		\includegraphics[width=0.15\textwidth]{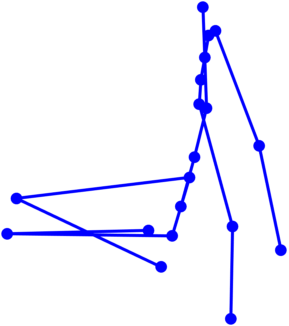}\\
	\end{tabular}
	\caption{\label{fig:qualitative-panoptic2}  \Panoptic.~3D reconstructions (Pose~1, id=141216 with cam HD 2).}	
\end{figure*}

\begin{figure}[htbp]
	\begin{tabular}{cc}
		\includegraphics[width=0.2\textwidth]{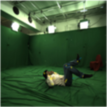}&
		\includegraphics[width=0.2\textwidth]{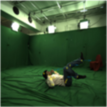}\\
		\includegraphics[width=0.14\textwidth]{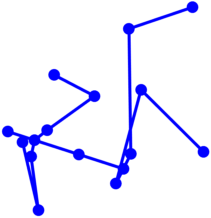}&
		\includegraphics[width=0.14\textwidth]{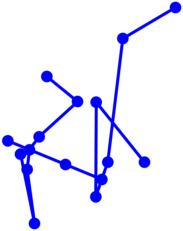}\\
		\includegraphics[width=0.15\textwidth]{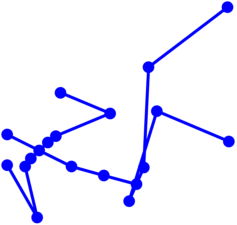}&
		\includegraphics[width=0.15\textwidth]{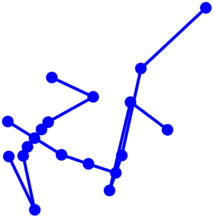}\\	
				\includegraphics[width=0.2\textwidth]{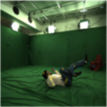}&
				\includegraphics[width=0.2\textwidth]{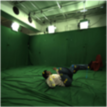}\\
				\includegraphics[width=0.14\textwidth]{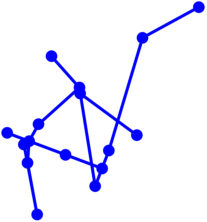}&
				\includegraphics[width=0.13\textwidth]{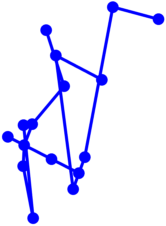}\\
				\includegraphics[width=0.15\textwidth]{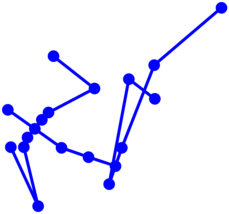}&
				\includegraphics[width=0.15\textwidth]{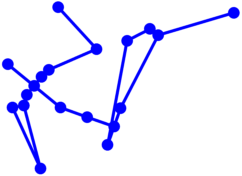}\\	
	\end{tabular}
	\caption{\label{fig:qualitative-mpii2}  \MPIINF.~3D reconstructions (S3, Seq2, Video 5).}	
\end{figure}
\begin{figure}[htbp]
	\includegraphics[width=0.23\textwidth]{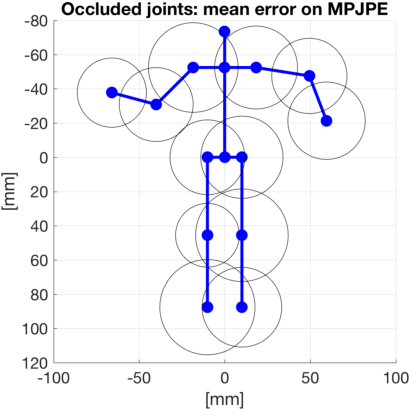}
	\includegraphics[width=0.23\textwidth]{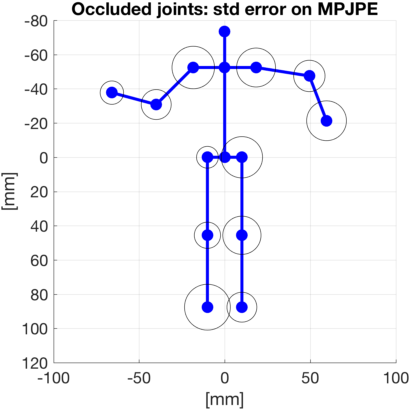}
	\caption{\label{fig:occluded_joint} 3D error evaluation with occluded joints. The center of circles represent the joint that is considered to be occluded. The radius of circles represent the related error computed on the whole skeleton reconstruction (mean and std respectively from left to right).}	
\end{figure}
\begin{figure}
	\includegraphics[width=0.45\textwidth]{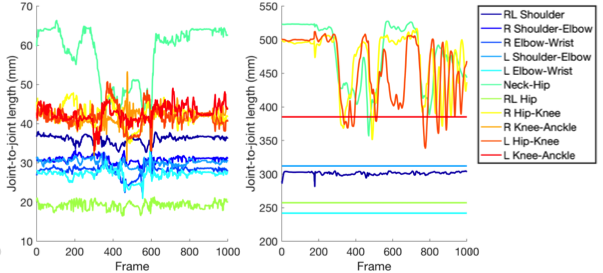}
	\caption{\label{fig:gbone_lengths}  Sample ground-truth joint-to-joint lengths. From left to right: \Panoptic (pose~$1$, $141216$) and \MPIINF.  (S$3$, Seq$2$, video $5$).}	
\end{figure}

\begin{figure}
	\begin{center}
		\includegraphics[width=0.3\textwidth]{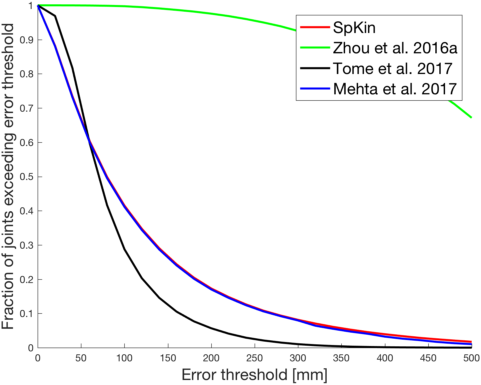}
		\caption{\label{fig:fraction_incorrect_joint} Fraction of joints exceeding error threshold with \Human dataset.}
	\end{center}	
\end{figure}

\appendix

\section{Bounds of rotational ariticulations\label{appendix:limit_joints}}

\begin{table*}[htbp]
	\begin{tabularx}{\textwidth}{lRlR}
		Articulation&Rotation Angles&Min $\kbracket{\deg}$&Max $\kbracket{\deg}$\\
		\hline
		Hip&$\kparen{R_z,R_x, R_y}$&$\kparen{-30,-10,-5}$&$\kparen{30,180,5}$\\
		Right Hip&$\kparen{R_z,R_x, R_y}$&$\kparen{-180,-170,-90}$&$\kparen{180,90,90}$\\
		Right Knee&$\kparen{R_x}$&$\kparen{0}$&$\kparen{150}$\\
		Spine 1&$\kparen{R_z,R_x, R_y}$&$\kparen{-2,0,-2}$&$\kparen{2,15,2}$\\
		Spine2&$\kparen{R_z,R_x, R_y}$&$\kparen{-2,0,-2}$&$\kparen{2,15,2}$\\
		Neck &$\kparen{R_z,R_x, R_y}$&$\kparen{-60,-30, -80}$&$\kparen{60,80, 80}$\\
		Mid Shoulder&$\kparen{R_z,R_x, R_y}$&$\kparen{-5,-5,-5}$&$\kparen{5,5,5}$\\
		Right Shoulder&$\kparen{R_z,R_x, R_y}$&$\kparen{-180,-100,-90}$&$\kparen{180,90,90}$\\
		Right Elbow&$\kparen{R_x}$&$\kparen{-150}$&$\kparen{0}$\\
		\bottomrule
	\end{tabularx}
	\caption{\label{tab:limit-joints} \small{Bounds of the rotational angles for the 40 DoF skeleton model. $R_x$ and $R_y$ are opposite signs for the angular articulations of the left skeleton side.}}
\end{table*}

\end{document}